\DeclareMathOperator*{\argmax}{argmax}
\DeclareMathOperator*{\argmin}{argmin}
\newcommand{\indep}{\rotatebox[origin=c]{90}{$\models$}}
\newcommand{\E}{\mathbb{E}}
\newcommand{\Xset}{\mathcal{X}}
\newcommand{\realset}{\mathbb{R}}
\newcommand{\obs}{\mathbf{x}}
\newcommand{\proba}{\mathbb{P}}
\newcommand{\Dist}{\mathcal{D}}
\newcommand{\risk}{R}
\newcommand{\indicatrice}{\mathbbm{1}}
\newcommand{\auucmax}{AUUC-max}
\newcommand{\auuc}{AUUC}
\newcommand{\UM}{Uplift Modeling~}
\newcommand{\UMdl}{Uplift Model~}
\newcommand{\ITE}{ITE}
\newcommand{\graph}{\mathcal{G}}
\newcommand{\vertices}{V}
\newcommand{\edges}{E}
\newcommand{\expectation}{\mathbb{E}}
\newcommand{\cover}{{\cal C}}
\newcommand{\covers}{{\cal K}}
\newcommand{\Weight}{W}
\newcommand{\Fset}{{\cal F}}
\newcommand{\Cset}{{\cal C}}
\newcommand{\variance}{\mathbb{V}}
\newcommand{\rademacher}{\mathfrak{R}}
\newcommand{\Input}{\mathcal{X}}
\newcommand{\Output}{\mathcal{Y}}
\newcommand{\bfZ}{\mathbf{z}}
\newtheorem{theo}{Theorem}
\newtheorem {pro}{Proposition}
\newtheorem{definition}{Definition}
\title{Treatment Targeting by AUUC Maximization \\with Generalization Guarantees}
\author{
 Artem Betlei \\
  Criteo AI Lab\\
  Grenoble, France\\
  \texttt{a.betlei@criteo.com} \\
   \And
 Eustache Diemert \\
   Criteo AI Lab \\
   Grenoble, France \\
   \texttt{e.diemert@criteo.com} \\
  \And
 Massih-Reza Amini \\
   UGA/CNRS LIG \\
   Grenoble, France \\
   \texttt{Massih-Reza.Amini@imag.fr} \\
}
\begin{document}
\maketitle
\begin{abstract}
We consider the task of optimizing treatment assignment based on individual treatment effect prediction. This task is found in many applications such as personalized medicine or targeted advertising and has gained a surge of interest in recent years under the name of Uplift Modeling. It consists in targeting treatment to the individuals for whom it would be the most beneficial. In real life scenarios, when we do not have access to ground-truth individual treatment effect, the capacity of models to do so is generally measured by the Area Under the Uplift Curve (\auuc{}), a metric that differs from the learning objectives of most of the Individual Treatment Effect (\ITE) models.
  We argue that the learning of these models could inadvertently degrade \auuc{} and lead to suboptimal treatment assignment.
  To tackle this issue, we propose a generalization bound on the \auuc{} and present a novel learning algorithm that optimizes a derivable surrogate of this bound, called \auucmax{}.
  Finally, we empirically demonstrate the tightness of this generalization bound, its effectiveness for hyper-parameter tuning and show the efficiency of the proposed algorithm compared to a wide range of competitive baselines on two classical benchmarks.
\end{abstract}


\section{Introduction}

In many applications there is a need to target actions to specific portions of a population so as to maximize a global utility. For instance, in personalized medicine one is interested in prescribing a treatment only to patients for whom it would be beneficial \cite{Jaskowski2012}.
Similarly in performance marketing, one would prefer to target advertisement budget towards potential customers that would be more likely to be persuadable to purchase \cite{Diemert2018}. A recent review of the Uplift Modeling literature \cite{devriendt2018literature,zhang2020unified} illustrates how these problems arise in a wide range of economic activities (credit scoring \cite{radcliffe1999differential}, catalog mailing, customer retention \cite{Radcliffe2007}, insurance \cite{guelman2014optimal}), medicine studies (bone marrow transplant, tamoxifen prescription, hepatitis \cite{Jaskowski2012} and breast cancer treatment \cite{kuusisto2014support}) and social sciences evaluations (job training programs \cite{imai2013estimating},  psychology \cite{kunzel2019metalearners} and student growth \cite{athey2019estimating}).

\textbf{Treatment assignment as a causality problem.} We formalize our goal as a problem of optimizing treatment assignment through individual treatment effect prediction.
From a causal perspective, assigning treatment to individuals is an \emph{intervention} and should be motivated by a causal model of the effect of treatment $T$ on the outcome of interest $Y$, conditionally on observed co-variates $X$.

Now there are two obvious possibilities to learn such a model: i) from data of a randomized experiment where the sole effect of treatment can be deduced, at the exclusion of all other causes or ii) from observational data and by assuming "unconfoundedness", that is we observe all causal parents of $Y$ and can adjust for them in the model.

Usually, the \UM literature relies on the former whereas Individual Treatment Effect (\ITE{}) studies assume the latter \cite{zhang2020unified}. 
In both cases a predictor of the \ITE{}: $ITE(x) = \E[Y|X=x,T=1] - \E[Y|X=x,T=0]$ can be learned from the data and corresponds to the difference of \emph{potential outcomes} in the Neyman-Rubin causal framework \cite{sekhon2008neyman}. As such it is able to accurately predict the benefit of assigning treatment to future individuals, given that the causal mechanism remains the same.

\textbf{Illustrative example.} Fig.~\ref{fig:treatment_optim} illustrates the \UM case, where data are available from prior, randomized experiments. It could be a pilot study using a randomized control trial with placebo for medicine or an A/B test for marketing (step 1). Such experiments can also be used to estimate the Average Treatment Effect (ATE) $ATE = \E[Y|T=1] - \E[Y|T=0]$. 
Then, different \ITE{} models can be learned and evaluated (step 2). A popular metric to value the quality of a model is the Area Under the Uplift Curve (\auuc) \cite{rzepakowski2010decision}. This metric measures the cumulative uplift along individuals sorted by predicted \ITE{}. A good model (with a high \auuc) scores higher those individuals for which the \ITE{} is high (beneficial) compared to ones for which the \ITE{} is low (neutral or even detrimental). Note that a perfect \ITE{} model would maximize \auuc{} \cite{Yamane2018}.
Finally, practitioners use \ITE{} predictions to \emph{rank} future instances and assign treatment to individuals with the highest scores (step 3) \cite{devriendt2018literature,fernandez2020methods}. So when a new cohort of individuals is available, the predictions of the model will be used to target treatment: highest scored individuals would be treated (green individuals in Figure \ref{fig:treatment_optim}) whilst the lowest scored ones would be excluded from treatment (blue individuals). This strategy is useful as soon as treatment effect is heterogeneous (i.e. depending on $X$).

\begin{figure}[!h]
    \centering
    \includegraphics[width=\linewidth]{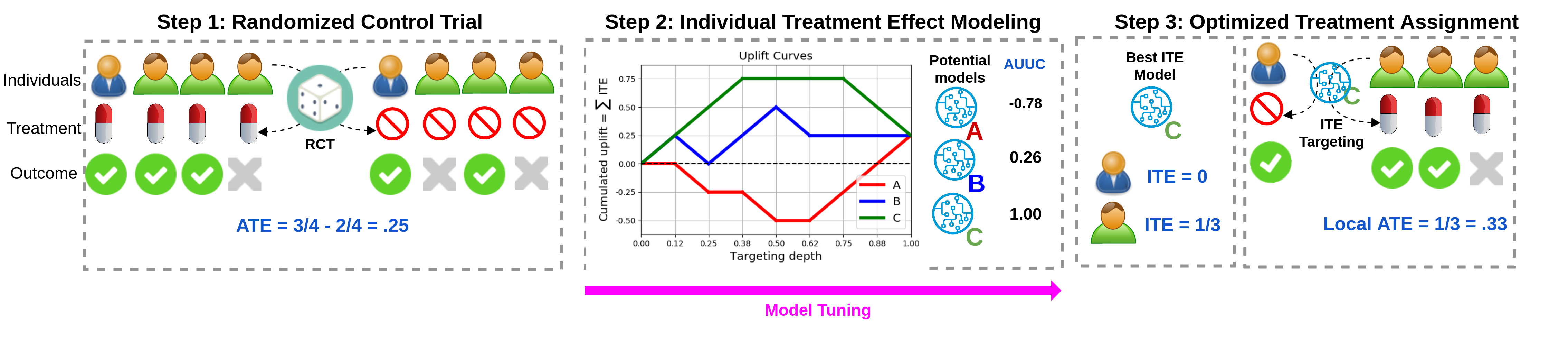}
    \caption{\textbf{The task of optimizing treatment assignment through \ITE{} Prediction}. Step 1 starts with a randomized control trial allowing to estimate ATE. Then Step 2 consists in learning and evaluating several \ITE{} models and selecting the best performing one by \auuc{} on data gathered at Step 1. Finally at Step 3, the best model is used to target treatment on the next cohort of individuals. Our main contribution is to propose an efficient model that generalizes safely between Step 2 and 3.\label{fig:treatment_optim}}
\end{figure}

\smallskip

\textbf{How ITE methods can fail?} As a motivation for our work we make the analogy with learning to rank. It is well known that classification algorithms designed to  minimize the error rate may not lead to the best possible Area Under the ROC Curve (AUC)   \cite{cortes2004auc}. For instance, one may arbitrarily reorder predictions greater than the classification threshold: this does not affect the error rate but leads to different AUC. 
Likewise, in \UM one may e.g. concurrently degrade the \ITE{} prediction of a sample and improve by the same amount another prediction: this operation does not affect the Precision when Estimating Heterogeneous Effect (PEHE) \cite{shalit2017estimating} but would lead to different \auuc{} values as the induced ranking changes.
In other words, when learning a classification (resp. \ITE{}) model one may inadvertently degrade AUC (resp. \auuc{}) whilst keeping a fixed error rate (resp. PEHE). In both cases this motivates the research of methods directly optimizing the metric of interest.

\smallskip

\textbf{Importance of generalization bounds.} Concurrently, works in machine learning now often incorporate generalization bounds in the design of learning algorithms \cite{langford2005tutorial}, a practice that developed recently in the \ITE{} and \UM community \cite{shalit2017estimating,Yamane2018}. 
Technically, for problems involving inter-dependent data samples (ranking being a noteworthy example) new bounds \cite{bartlett2005,usunier2006generalization,ralaivola2015entropy} have proven to be tighter than the classical MacDiarmid inequality \cite{mcdiarmid1989method}. This body of works inspired us to form a suitable bound for \auuc.

\smallskip

\textbf{Our contributions.} Considering the crucial role of treatment targeting in many applications, the need for models that optimize the metric of interest directly and the advances in the technical tools needed to study generalization properties of inter-dependent variables we form the following research agenda: i) study generalization bounds for \auuc, ii) derive a learning objective and iii) experiment the corresponding empirical performance compared to traditional methods.
Our main contributions in that respect are summarized as follows. 
\begin{itemize}
    \item[$(a)$] Using data-dependent concentration inequalities on dependent variables, Th. \ref{thrm:AUUC} states the first generalization bound for \auuc{} (Section \ref{sec:bounds}); 
\item[$(b)$] Algo. \ref{algo:auuc_max}  presents \auucmax, a linear \UMdl that optimizes a ranking formulation of \auuc{} guaranteed to generalize (Section \ref{sec:auuc_max_algo}); and finally 
\item[$(c)$] Section \ref{sec:experiments} reports a thorough performance evaluation against a wide range of competitive baselines. 
\end{itemize}

\section{Background} 
\label{ssec:eval_metrics}

\paragraph{Notations.} 
Let $\Xset \in \realset^d$, $\obs \in \Xset$ be a feature vector, and $y \in \{0, 1\}$ be the outcome variable, indicating positive ($y = 1$) or negative ($y = 0$) outcome. Additionally, let the treatment variable $g \in \{T, C\}$ denote whether an individual receives treatment ($g = T$) or not ($g~=~C$). We assume a dataset of size $N$ from a randomized control trial: $(x_i,g_i,y_i) \overset{\text{iid}}{\sim}
~P_{X,G,Y}; X \indep G$.
Also let $\tilde{S}^g = \{\obs_i^{1-y_i}, g\}_{i=1\ldots n^g}$ be the version of subset with reverted labels. 
We define then $S^g = \{\obs_i^{y_i}, g\}_{i=1\ldots n^g}$ as the particular subset of the training set $S$, i.e. $S=S^T\sqcup S^{C}$ and $N=n^T+n^C$; where $\obs_i^{y_i}$ represents the pair $(\obs_i,{y_i})$ composed by observation $\obs_i$ and its associated class $y_i$.
We define $\bar{y}^g = \mathbbm{E}[Y|T=g]$ and $\lambda^g = \bar{y}^g(1-\bar{y}^g)$ the treatment conditional mean and variance of Bernoulli outcome. Finally, let $\mathcal{F}=\{f:\Xset\rightarrow\realset\}$ be the set of real-valued functions. 

\paragraph{How to evaluate models?}
We will review here the formal definition of \auuc{} along with  basic concepts needed to frame the problem of maximizing \auuc. It is important to recall firstly the fundamental problem of causal inference: one can never observe a given individual in both treated and untreated conditions. Therefore metrics computing a difference to the true causal effect can only work in a simulation setting where you know both potential outcomes. A popular example of such a metric is Precision when Estimating Heterogeneous Effect (PEHE) \cite{shalit2017estimating}.
At the same time one can estimate group-level treatment effect by computing the so called local average treatment effect. This idea underlies the \emph{Area~Under~the~Uplift~Curve} (\auuc) \cite{rzepakowski2010decision}, the most popular method for evaluating \UMdl in the literature. Intuitively, with a good \ITE{} model, the best scored individuals should yield a high ATE. An uplift curve can verify this property: it ranks individual samples according to \emph{predicted} \ITE{} (see the X-axis of Fig. \ref{fig:treatment_optim}) and cumulatively sum the \emph{observed} ATE (in Y-axis). The \auuc{} is then the area under this curve. On Fig.~\ref{fig:treatment_optim} - middle picture, \ITE{} model \textcolor{green}{C} is better than \textcolor{red}{A} or \textcolor{blue}{B} as its area is greater, denoting that cumulated ATE is greater when individuals are ranked according to model \textcolor{green}{C}. 
\auuc{} is thus useful when $i)$ all individuals could not be "treated" (e.g. because of limited budget) and $ii)$ data comes from real-world (no simulation). Intuitively, \auuc{} penalizes heavily ranking errors on highest scored individuals, which is reasonable for settings where treatment would be administered only to highest predictions in future cohorts. 

\paragraph{Formalization of \auuc{}.} The uplift modeling literature yields multiple estimators of \auuc, with differences residing mainly in $i)$ the way treatment imbalance is accounted for; and $ii)$ whether treated and control groups are ranked separately or jointly. Readers can refer to Table 2 in \cite{devriendt2020learning} for a comprehensive picture of available alternatives. We chose the "joint, relative" estimator introduced in (Eq. 13) of \cite{surry2011quality}. Evaluations in  \cite{devriendt2020learning} have concluded that this choice is robust to treatment imbalance and captures well the intended usage of \ITE{} models to target future treatments. We give a self-contained formula in Def. \ref{def:auuc}, corresponding to (Eq. 12 and 15) of \cite{devriendt2020learning}.


\begin{definition}[Area Under the Uplift Curve]
Let $f(\mathcal{D},k)$ be the $k$ first elements of the dataset $\mathcal{D}$ when ordered by prediction of model $f$. Let $|T|$ (resp. $|C|$) be the number of treated, $T=1$ (resp. control, $T=0$) individuals in the dataset. The joint, relative \auuc{} of model $f$ is given by:
\label{def:auuc}
\begin{equation*}
AUUC(f) = \int_0^1 V(f,x)dx \approx \sum_{k=1}^n V(f,k) 
\end{equation*}
where 
\begin{equation*}
V(f,k) = \frac{1}{|T|} \sum_{i \in f(\mathcal{D},k) } y_i \indicatrice_{[t_i = 1]} - \frac{1}{|C|} \sum_{j \in f(\mathcal{D},k) } y_j \indicatrice_{[t_j = 0]}
\end{equation*}
\end{definition}

\paragraph{Basic models.} We now introduce popular \ITE{} prediction models used for \UM{} to materialize the task. \emph{Two Models (TM)} \cite{hansotia2002incremental} is a trivial method to predict \ITE. It uses two separate probabilistic models to predict outcome in treated or untreated conditions:

\vspace{-1.5em}
\begin{equation}
\label{eq:twomodels}
P^{TM}(x) = P(Y=1 | X=x, T=1) - P(Y=1 | X=x, T=0)
\end{equation}

and any prediction model can be used (typically logistic regression). We notice that when the average response is low and/or noisy there is the risk for the difference of predictions to be very noisy too and lead to arbitrary ranking of individuals overall (see \cite{radcliffe2011real} for a detailed critic). 
This remark makes a general argument for using methods that combine knowledge of both parts of the dataset. 
Multi-task approaches, e.g. \emph{Shared Data Representation} (\emph{SDR}) have been proposed in \cite{betlei2018uplift} to overcome this problem and showed better empirical performance when the treatment is imbalanced.
\emph{Class Variable Transformation (CVT)} \cite{Jaskowski2012} combines binary treatment and outcome in order to use a single classification model. For this purpose a new label and predictor are defined:
\begin{equation}
\label{eq:Reverted}
\begin{cases*}
& $Z = Y T + (1-T)(1-Y)$\\ 
& $P^{CVT}(x)  = 2 P(Z = 1|X=x) - 1$
\end{cases*}
\end{equation}
Similar label transformations could be traced back to Robinson \cite{robinson1988root} and extended to more general settings \cite{nie2017quasi,athey2015machine}. Other related, productive lines of research have been $i)$ the adaptation of split criteria of Decision Trees \cite{radcliffe2011real,rzepakowski2012decision,Sotys2015} for \ITE{}  prediction and; $ii)$ deep representation learning approaches for the observational case that carefully match treated/control embedding distributions \cite{shalit2017estimating,bica2020estimating,louizos2017causal}.

\section{On the Generalization Bound of AUUC and Learning Objective} 
\label{sec:bounds}
In this section, we bound the difference between \auuc{} and its expectation and use this new bound to formulate a corresponding learning objective.
For that purpose, we start by drawing a connection between \auuc{} and bipartite ranking risk (Section \ref{ssec:ConnAUUC_AUC}); and by means of Rademacher concentration inequalities  build a bound (Section \ref{ssec:RadGenBnd}). Then we define a principled optimization method with generalization guarantees for \auuc{} that leverages the bound as a robust learning objective (Section \ref{sec:auuc_max_algo}). Finally, we review related approaches and their merits as found in the literature (Section \ref{ssec:relatedworks_discussion}).

\subsection{Connection between AUUC and Bipartite Ranking Risk}
\label{ssec:ConnAUUC_AUC}

From the equivalence between the Area under the ROC curve and the bipartite ranking risk, we can show that \auuc{} is  a weighted combination of ranking losses for the treatment and control responses. Formal version of the decomposition is provided in Proposition \ref{prop:one}.

\begin{pro}
\label{prop:one}
Let $AUUC(f,S^T,S^C)$ be the empirical area under uplift curve of the model $f$ on the sets $S^T$ and $S^C$; and $AUUC(f)=\mathbb{E}_{S^T,S^C} \left[AUUC(f,S^T,S^C)\right]$ be its expectation. Then $AUUC(f)$ is related to ranking loss (Eq. \ref{eq:auc}) as:
\begin{equation}
\begin{aligned}
\label{eq:auuc_risks} 
AUUC(f) = \gamma-\lambda^T\mathbb{E}_{S^T}[\hat{\risk}(f,S^T)]-\lambda^C\mathbb{E}_{\tilde{S}^C}[\hat{\risk}(f,\tilde{S}^C)]
\end{aligned}
\end{equation}
where
\begin{equation}
	\hat{\risk}(f,S^g) \triangleq \frac{1}{n^g_+n^g_-}\sum_{(\obs_i,+1)\in S^g}\sum_{(\obs_j,0)\in S^g}\indicatrice_{f(\obs_i)\leq f(\obs_j)}
	\label{eq:auc}
\end{equation}
is the empirical bipartite ranking risk, $g\in\{T,C\}$, $n^g_+,n^g_-$ are the amounts of positives and negatives respectively in the set $S^g$ (i.e. $n^g=n^g_-+n^g_+$), and $\gamma = \bar{y}^T - \frac{(\bar{y}^T)^2}{2} - \frac{(\bar{y}^C)^2}{2}$. 
\end{pro}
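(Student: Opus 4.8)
The plan is to unfold the empirical \auuc{} of Definition~\ref{def:auuc} into a sum of pairwise score comparisons, split this sum across the treated and control arms, and then pass to the expectation $\mathbb{E}_{S^T,S^C}$ so that each comparison probability can be identified with the bipartite ranking risk of Eq.~(\ref{eq:auc}). I assume, as elsewhere in the paper, that $f(X)$ is continuous so that ties have probability zero.

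First I would exchange the order of the two summations in $\sum_k V(f,k)$: a sample $\obs_i$ contributes to $V(f,k)$ exactly for the prefixes that contain it, i.e.\ as many times as there are samples ranked no higher than it, which equals $1+\sum_{\ell\neq i}\indicatrice_{f(\obs_\ell)<f(\obs_i)}$ with $\ell$ running over all of $S=S^T\sqcup S^C$. Hence, up to the normalizing constant of Definition~\ref{def:auuc}, $\auuc(f,S^T,S^C)$ becomes a difference, between treated positives and control positives, of these per-sample counts; the ``$+1$'' diagonal terms only produce the base rates $\bar y^T,\bar y^C$ and are of lower order in the sample size after normalization (consistent with reading \auuc{} as the integral $\int_0^1 V(f,x)\,dx$).

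Next I would take the expectation, conditioning on the arm sizes. The single place the randomization $X\indep G$ is used is this: every \emph{other} sample, whether treated or control, carries the same covariate law $P_X$, so for a sample with score $f(\obs_i)$ the conditional mean of $\sum_{\ell\neq i}\indicatrice_{f(\obs_\ell)<f(\obs_i)}$ is $(n-1)\,G(f(\obs_i))$, where $G$ is the c.d.f.\ of $f(X)$ under $P_X$, regardless of which arm supplied the comparison samples. Since the outcome of a sample is independent of the other samples' covariates, the treated contribution collapses to $(n-1)\,\mathbb{E}_{X\sim P_X}[\mu^T(X)\,G(f(X))]$ and the control one to $(n-1)\,\mathbb{E}_{X\sim P_X}[\mu^C(X)\,G(f(X))]$, with $\mu^g(x)=\mathbb{P}(Y=1\mid X=x,T=g)$, and the arm sizes cancel in the difference. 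It then remains to evaluate $\mathbb{E}_{X\sim P_X}[\mu^g(X)\,G(f(X))]$, which is precisely the functional in the classical ROC/bipartite-ranking identity: symmetrizing the underlying double integral gives $\mathbb{E}_{X\sim P_X}[\eta(X)\,G(f(X))]=\tfrac12 p^2+p(1-p)\,\mathrm{AUC}(f)$ for any label model $\eta$ of base rate $p$, with $\mathrm{AUC}(f)=1-\mathbb{E}[\hat{\risk}(f,\cdot)]$ by Eq.~(\ref{eq:auc}). Applying this with $\eta=\mu^T,\ p=\bar y^T$ and with $\eta=\mu^C,\ p=\bar y^C$, then rewriting the control term with the almost-sure identity $\hat{\risk}(f,\tilde S^C)=1-\hat{\risk}(f,S^C)$ (the reverted-label version), and subtracting, the base rates collapse into $\gamma=\bar y^T-\tfrac{(\bar y^T)^2}{2}-\tfrac{(\bar y^C)^2}{2}$ and what remains is exactly Eq.~(\ref{eq:auuc_risks}).

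The step I expect to be the main obstacle is the bookkeeping around the cross-arm comparisons: taken naively they look as though they should cancel against the control side, whereas in fact they must \emph{recombine} with the within-arm comparisons, and what legitimizes this is exactly $X\indep G$ (a control covariate is distributionally indistinguishable from a treated one for the purpose of the score comparison). One must also track the $O(1/n)$ diagonal and normalization terms — which is why the identity is cleanest for the integral/population form of \auuc{} — but the reverted-label identity and the remaining algebra are routine.
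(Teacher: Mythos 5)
Your proof is correct, but it takes a genuinely different route from the paper's. The paper never touches the pairwise structure of the empirical curve: it quotes the identity $V(f,x)=F^{S^T}_f(x)-F^{S^C}_f(x)$ from Surry and Radcliffe, then their relation between $\int_0^1 F^{\mathcal D}_f(x)\,dx$ and the Gini coefficient, and finally $G=2\,AUC-1$, which together give $\int_0^1 F^{\mathcal D}_f(x)\,dx=\bar y^{\mathcal D}(1-\bar y^{\mathcal D})\,AUC(f,\mathcal D)+\tfrac{(\bar y^{\mathcal D})^2}{2}$; from there the label-reversion step $AUC(f,S^C)=1-AUC(f,\tilde S^C)$, the substitution $AUC=1-\hat{\risk}$ and the passage to expectations are exactly the steps you also perform. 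You instead rederive that key identity from first principles, by unfolding $\sum_k V(f,k)$ into per-sample rank counts and evaluating $\mathbb{E}_{X}[\mu^g(X)\,G(f(X))]=\tfrac12 p^2+p(1-p)\,AUC$, and both arguments meet at the same intermediate expression $\lambda^T AUC(f,S^T)-\lambda^C AUC(f,S^C)+\tfrac{(\bar y^T)^2}{2}-\tfrac{(\bar y^C)^2}{2}$. What your route buys is that it makes explicit where $X\indep G$ actually enters --- equating prefix statistics of the \emph{joint} ranking with the \emph{within-arm} AUCs, a point the paper delegates entirely to the cited reference --- and it is self-contained. The price is the bookkeeping you acknowledge: the $O(1/n)$ diagonal and normalization discrepancies between the sum and integral forms of Def.~\ref{def:auuc}, which the paper sidesteps by working with the integral form and citing exact identities for it; to match the exact equality claimed in the proposition you should state once and for all that you prove it for the integral/population form, where those terms vanish identically rather than merely asymptotically.
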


\textit{Sketch of proof}. Our derivation can be divided into two steps. The first step is to decompose \auuc{} into the weighted difference of Areas Under ROC curve (AUC) for the groups $T$ and $C$ using properties from \cite{surry2011quality} and \cite{tuffery2011data}. Then we revert the labels in control group in order to turn our decomposition from a weighted difference to a weighted sum to be able to construct a union bound of two AUCs. The full proof is given in the supplementary.



\subsection{Rademacher Generalization Bounds}
\label{ssec:RadGenBnd}
Let us now consider the minimization problems of the pairwise ranking losses over the treatment and the control subsets (Eq. \ref{eq:auc}), and the following dyadic transformation defined over each of the groups~$S^T$~and~$\tilde{S}^C$:
\begin{equation*}
\label{eq:transfo}
\mathcal{T}(S^g)\! = \left\{\left(\bfZ=(\obs,\obs'),\tilde{y} \right)\! \bigm\vert\!  (\obs^y,\obs'^{y'})\!\in\! S^g\!\times  S^g  \wedge y\neq y'\right\}
\end{equation*}
where $g\in\{T,C\}$, and $\tilde y=+1$ iff $y=+1$ and $y'=0$ and; $\tilde y=-1$ otherwise. Here we suppose that $\mathcal{T}(S^g)$  contains just one of the two pairs that can be formed by two examples of different classes. This transformation corresponds then to the set of $n_+^gn_-^g$ pairs of observations in $S^g$ that are from different classes. 

From this definition and  the class of functions, $\mathcal H$, defined as:
\begin{equation}
\label{eq:ClassOfFunction}
\mathcal H=\{h:  \bfZ=(\obs,\obs') \mapsto  f(\obs) - f (\obs')  , f\in\mathcal{F}\},
\end{equation}
the empirical loss (Eq.~\ref{eq:auc})  can be rewritten as:
\begin{equation}
\label{eq:EmpLossMC2}
\hat{\risk}(h,\mathcal{T}(S^g)) =\frac{1}{n^g_+n^g_-}\sum_{(\bfZ,\tilde{y})\in \mathcal{T}(S^g)}\indicatrice_{\tilde{y} h(\bfZ)\leq 0}.
\end{equation}
The loss defined in (Eq.~\ref{eq:EmpLossMC2}) is equivalent to a binary classification error over the pairs of examples in $\mathcal{T}(S^g)$. With this equivalence, one may expect to use efficient generalization bounds developed in binary classification.   However, (Eq. \ref{eq:EmpLossMC2}) is a sum over random dependent variables; as each training examples in $S^g$ may be present in different pairs of examples in $\mathcal{T}(S^g)$, and the study of the consistency of the Empirical Risk Minimization principle cannot be carried out using classical tools; as the  central i.i.d. assumption on which these tools are built on is transgressed. For this study, we consider $\mathcal{T}(S^g)$ as a dependency graph of random variables $(\bfZ_{j},\tilde{y}_j)_j$ on its nodes, and similar to \cite{usunier2006generalization}, we decompose it using the \textit{exact proper fractional cover} of the graph proposed by \cite{Janson04RSA} and defined as:

\begin{definition}

Let $\graph=(\vertices,\edges)$ be a
graph. $\cover~=~\{(\Cset_j,\omega_j)\}_{j\in[J]}$, for some positive integer $J$, with
$\Cset_j\subseteq\vertices$ and $\omega_j\in [0;1]$ is an exact proper
fractional cover of $\graph$, if:
\begin{enumerate}
\item it is {\em proper:} $\forall j,$ $\Cset_j$ is an {\em independent set}, i.e., there is no connections between vertices in
  $\Cset_j$;
\item it is an {\em exact fractional cover} of $G$: $\forall
  v\in~\vertices,\;\sum_{j:v\in\Cset_j}\omega_j= 1$.
\end{enumerate}
The weight $\Weight(\cover)$ of $\cover$ is given by: $\Weight(\cover)=\sum_{j\in[J]}\omega_j$ and the
minimum weight $\chi^*(\graph)=\min_{\cover\in\covers(\graph)} \Weight(\cover)$ over the set $\covers(\graph)$ of all exact proper fractional covers of $\graph$ is the {\em fractional chromatic number} of $\graph$.
\end{definition}

Here, the weight $\Weight(\cover)$ of $\cover$ is given by $\Weight(\cover)=~\sum_{k=1}^J \omega_k$ and the minimum weight, called the fractional chromatic number, and defined as $\chi^*(\graph)=\min_{\cover\in\covers(\graph)}\Weight(\cover)$ corresponds to the smallest number of subsets containing independent variables. 


From this definition, \cite{ralaivola2015entropy} proposed new concentration inequalities by extending the fractional Rademacher complexity introduced in \cite{NIPS2005_2860} to local fractional Rademacher complexity \cite{bartlett2005} defined by a bound over the variance of the prediction functions. In this case, a strategy which consists in choosing a model with the best generalization error tends to select functions with small variance in their predictions and a small bounded complexity. 
\begin{definition}

\label{def:lfrc}
The Local Fractional Rademacher Complexity, $\rademacher_{S^g}(\Fset_r)$, of the class of functions with bounded variance $\Fset_{r}=\{f:\Xset\mapsto \mathbb{R}:\variance f\leq r\}$ over the dyadic transformation, $\mathcal{T}(S^g)$ of size $n^g_+n^g_-$, of the set $S^g$, is given by:
\begin{equation}
\label{eq:LFR}
	\rademacher_{S^g}(\Fset_r)\!
	=\!\frac{1}{n^g_+n^g_-}\expectation_{\sigma}\!\!\left[\sum_{j\in[J]}\omega_j\expectation_{X_{\Cset_j}}\!\!\left[\sup_{f\in\Fset_r}\sum_{i\in\Cset_j}\sigma_if(X_i)\right]\right]
\end{equation}
with $\mathbf{\sigma}=(\sigma_1,\ldots,\sigma_{n_+^gn_-^g})$ being $n_+^gn_-^g$ independent Rademacher variables verifying:\\ $\proba(\sigma_i=+1)=\proba(\sigma_i=-1)=1/2; \forall i\in\{1,\ldots,n_+^gn_-^g\}$.
\end{definition}

From these statements, we can now present the first data-dependent generalization lower bound for \auuc.

\begin{theo}
\label{thrm:AUUC}
        Let $S=(\obs_i^{{y_i}})_{i=1}^m\in (\Input\times \Output)^m$ be a dataset of $m$ examples drawn i.i.d. according to a probability distribution $\Dist$ over $\Input\times \Output$,  and decomposable according to treatment $S^T$ and reverted label control $\tilde{S}^C$ subsets. Let  $\mathcal{T}(S^T)$ and $\mathcal{T}(\tilde{S}^C)$ be the corresponding transformed sets. Then for any $1>\delta>0$ and $0/1$ loss $\ell:\{-1,+1\}\times\mathbb{R}\rightarrow [0,1]$, with probability at least~$(1-\delta)$ the following lower bound holds for all $f\in \mathcal \Fset_{r}$:
        \begin{align*}
        AUUC(f)\!\geq &~\gamma - \Bigl(\lambda^T \hat{R}_\ell(f,S^T)+\lambda^C \hat{R}_\ell(f,\tilde{S}^C)\Bigr)\nonumber 
        - C_\delta(\Fset_r,S^T,\tilde{S}^C) - \frac{25}{48}\biggl(\frac{\lambda^T}{n_+^T}+\frac{\lambda^C}{n_\_^C}\biggr)\log\frac{2}{\delta}
        \end{align*}
{where, \footnotesize $ C_\delta(\Fset_r,S^T,\tilde{S}^C)\!=\!(\lambda^T\rademacher_{S^T}(\Fset_r))+\lambda^C\rademacher_{\tilde{S}^C}(\Fset_r))+ \!\left(\frac{\!\frac{5}{2}\sqrt{\rademacher_{S^T}(\Fset_r)}+\frac{5}{4}\sqrt{2r}}{\sqrt{n_+^T}}\lambda^T\!\!\!+\!\!\frac{\frac{5}{2}\sqrt{\rademacher_{\tilde{S}^C}(\Fset_r)}+\frac{5}{4}\sqrt{2r}}{\sqrt{n_\_^C}}\lambda^C\right)\!\!\sqrt{\log\frac{2}{\delta}}$} is defined with respect to local Rademacher complexities of the class of functions $\mathcal{F}_r$ estimated over the treatement and the control sets.
        \end{theo}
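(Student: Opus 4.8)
\textit{Proof proposal.} The plan is to obtain the lower bound by feeding uniform upper bounds on the two expected pairwise ranking risks into the exact identity of Proposition~\ref{prop:one}. That proposition gives $AUUC(f)=\gamma-\lambda^T\,\E_{S^T}[\hat{\risk}(f,S^T)]-\lambda^C\,\E_{\tilde S^C}[\hat{\risk}(f,\tilde S^C)]$ with $\lambda^T,\lambda^C\ge 0$, so it suffices to upper bound each $\E_{S^g}[\hat{\risk}(f,S^g)]$ by $\hat{R}_\ell(f,S^g)$ plus a complexity term plus a deviation term, uniformly over $f\in\Fset_r$, and then recombine with the nonnegative weights $\lambda^g$.

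First, fixing one group, I would rewrite its empirical ranking risk as the binary classification error (Eq.~\ref{eq:EmpLossMC2}) of the pair-function class $\mathcal H$ over the dyadic set $\mathcal{T}(S^g)$ of $n^g_+n^g_-$ dependent pairs. Viewing $\mathcal{T}(S^g)$ as a dependency graph and taking the exact proper fractional cover of \cite{Janson04RSA} as in \cite{usunier2006generalization}, the sum splits into $J$ blocks $\Cset_j$ of mutually independent variables weighted by $\omega_j$. For the bipartite pair graph the fractional chromatic number equals $\max(n^g_+,n^g_-)$, so the normalization $\Weight(\cover)/(n^g_+n^g_-)$ collapses to $1/\min(n^g_+,n^g_-)$; after the label reversal this $\min$ becomes $n^T_+$ for the treatment group and $n_\_^C$ for the (reverted) control group, which is the origin of the $1/\sqrt{n}$-style factors in the statement.

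Next, I would invoke the concentration inequality for the local fractional Rademacher complexity of \cite{ralaivola2015entropy} (an entropy-method, Bernstein-type bound on the chromatic decomposition) applied to $\Fset_r$ composed with the $0/1$ loss, together with the standard symmetrization step and the variance constraint $\variance f\le r$ of Definition~\ref{def:lfrc} — which controls the variance of the loss and hence activates the fast-rate part of the inequality. This yields, with probability at least $1-\delta/2$ and simultaneously for all $f\in\Fset_r$, a per-group bound $\E_{S^g}[\hat{\risk}(f,S^g)]\le \hat{R}_\ell(f,S^g)+\rademacher_{S^g}(\Fset_r)+\bigl(\tfrac52\sqrt{\rademacher_{S^g}(\Fset_r)}+\tfrac54\sqrt{2r}\bigr)\bigl/\sqrt{n^g_{\min}}\cdot\sqrt{\log(2/\delta)}+\tfrac{25}{48\,n^g_{\min}}\log(2/\delta)$, with $n^g_{\min}\in\{n^T_+,n_\_^C\}$. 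I would then apply this once to $(S^T,\lambda^T,n^T_+)$ and once to $(\tilde S^C,\lambda^C,n_\_^C)$, each at confidence $1-\delta/2$, take a union bound so both hold with probability at least $1-\delta$, multiply the two bounds by $\lambda^T$ and $\lambda^C$ respectively, add, and substitute into the Proposition~\ref{prop:one} identity; grouping the Rademacher and $\sqrt{2r}$ terms into $C_\delta(\Fset_r,S^T,\tilde S^C)$ and the residual logarithmic terms into $\tfrac{25}{48}\bigl(\tfrac{\lambda^T}{n^T_+}+\tfrac{\lambda^C}{n_\_^C}\bigr)\log(2/\delta)$ reproduces the claimed inequality.

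The main obstacle is the middle step: transferring the local fractional Rademacher concentration result of \cite{ralaivola2015entropy} onto the dyadic-transformed sets. This requires verifying the dependency-graph hypotheses for $\mathcal{T}(S^g)$ and bounding its fractional chromatic number so that the $n^g_+n^g_-$ normalization reduces to $\min(n^g_+,n^g_-)$, and — more delicately — translating the variance bound $\variance f\le r$ into the variance control on the $0/1$-loss class that the ``local'' argument needs, since this is exactly what produces the $\sqrt{2r}$ contribution and pins down the constants $\tfrac52$, $\tfrac54$, and $\tfrac{25}{48}$. A secondary bookkeeping point is relating $\E_{S^g}[\hat{\risk}(f,S^g)]$, the object appearing in Proposition~\ref{prop:one}, to the population pairwise risk to which the concentration inequality is stated, handling the randomness of the class counts $n^g_\pm$ (e.g.\ by conditioning on them).
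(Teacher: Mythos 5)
Your proposal follows essentially the same route as the paper's proof: start from the exact decomposition of Proposition~\ref{prop:one}, apply the local fractional Rademacher generalization bound of \cite{ralaivola2015entropy} to each group's pairwise ranking risk at confidence $1-\delta/2$, then combine via a union bound with the nonnegative weights $\lambda^T,\lambda^C$. The only cosmetic difference is that you quote each per-group bound directly in its optimized form, whereas the paper states it with a free parameter $a^g$ and then computes the minimizer $a^g=\tfrac{5}{4}\sqrt{\log(2/\delta)/(n^g_+\,\rademacher_{S^g}(\Fset_r))}$ to obtain the constants $\tfrac{5}{2}$ and $\tfrac{25}{48}$.
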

\textit{Sketch of proof}. The proof is based on the generalization upper bounds of the ranking losses, $\hat{R}_\ell(f,S^T)$ and $\hat{R}_\ell(f,\tilde{S}^C)$, proposed in \cite{ralaivola2015entropy}. The result is then deduced from the union bound after finding the optimal constants that appear in the infimums of these generalization bounds. Detailed proof is provided in the supplementary material.

Note that the convergence rate of the bound is governed by least represented class in both treatment and reverted control subsets. 

\subsection{AUUC-max Algorithm} 
\label{sec:auuc_max_algo}


From Theorem \ref{thrm:AUUC}, we can formulate an optimization problem for the expected value of \auuc{} as follows:
\begin{align}
\label{eq:auuc_loss_raw} 
    \argmax_{f \in \Fset_{r}}  AUUC(f) \equiv \argmin_{\theta, r} \left(\lambda^T \hat{\risk}(f_\theta,S^T) + \lambda^C \hat{\risk}(f_\theta,\tilde{S}^{C}) + C_\delta(\Fset_r,S^T,\tilde{S}^C)\right)
\end{align}  

There are two remarks that we can make at this point. First, both terms $\hat{\risk}(f_\theta,S^T)$ and $\hat{\risk}(f_\theta,\tilde{S}^{C})$ in \eqref{eq:auuc_loss_raw}  are defined over the instantaneous ranking loss $\indicatrice_{\tilde{y}(f(\obs)-f(\obs'))\leq 0}$ and in practice we need a differentiable surrogate over these losses so that the minimization problem can be solved using standard optimization techniques. Second, the local fractional Rademacher complexities $\rademacher^T(\Fset_{r})$ and $\rademacher^C(\Fset_{r})$  that appear in $C_\delta(\Fset_r,S^T,\tilde{S}^C)$ should be estimated for some fixed class of functions $\Fset_{r}$ with a well suited value of $r$. 

For the first point, we propose to use differentiable surrogates of the instantaneous ranking loss, such as $ s_{log}\left(z\right) = \ln \left(1 + e^{-z}\right) / \ln(2)$ and $s_{poly}\left(z\right)=~\left(-\left(z-\mu\right)\right)^{p} \indicatrice_{z<\mu}$. Note that $s_{log}$ upper-bounds the indicator function $\indicatrice_{z \leq 0}$.  This is also the case for $s_{poly}$  with $\mu = 1$ and $p = 3$.

For the second point, we propose to use the class of linear functions and to  upper bound both local Rademacher complexities $\rademacher^T(\Fset_{r})$ and $\rademacher^C(\Fset_{r})$ following Proposition \ref{prop:two}.

\begin{pro}
\label{prop:two}
    Let $S^g$ be a sample of size $n^g$ with $n^g_+$ samples with positive labels and such that $\forall \obs\in~S^g; \|\obs\|\leq~R
$. Let $\Fset_{r}=\{\mathbf{x} \mapsto \mathbf{w}^\top \mathbf{x}:\|\mathbf{w}\| \leq \Lambda ; f\in \Fset:\variance f\leq r\}$, be the class of linear functions with bounded variance and bounded norm over the weights. Then for any $1>\delta>0$, the empirical local fractional Rademacher complexity of $\Fset_{r}$ over the set of pairs $\mathcal{T}(S^g)$ of size $n_+^gn_-^g$, can be bounded with probability at least $1-\frac{\delta}{2}$ by: \\
    \begin{equation}
    \label{eq:R_LF_bound} 
    \rademacher_{S^g}(\Fset_{r}) \leq \sqrt{\frac{R^{2} \Lambda^{2}}{n^g_+}} + \sqrt{\frac{\log \frac{2}{\delta}}{2n^g_+}}
    \end{equation}

\end{pro}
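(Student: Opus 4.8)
The plan is to bound the empirical local fractional Rademacher complexity $\rademacher_{S^g}(\Fset_r)$ by splitting it into a ``classical'' fractional Rademacher term plus a correction coming from the variance constraint $\variance f \leq r$. First I would recall that for the dyadic transformation $\mathcal{T}(S^g)$ the fractional chromatic number of the associated dependency graph is controlled by the size of the smallest class, i.e. $\chi^*(\graph) \leq \min(n^g_+,n^g_-)$; this is the standard fact (used in \cite{usunier2006generalization,ralaivola2015entropy}) that an exact proper fractional cover can be built from $n^g_+$ independent sets (or $n^g_-$), each consisting of pairs that share no common example. Plugging the associated cover $\{(\Cset_j,\omega_j)\}$ with $\Weight(\cover)=n^g_+$ into Definition \ref{def:lfrc} and using that each $\Cset_j$ has at most $n^g_-$ elements, the sum $\frac{1}{n^g_+n^g_-}\sum_j \omega_j \E_{X_{\Cset_j}}[\sup_f \sum_{i\in\Cset_j}\sigma_i f(X_i)]$ reduces to a Rademacher average of a single group of at most $n^g_-$ i.i.d. (within the independent set) terms, normalized by $n^g_+n^g_-$, which effectively gives an empirical Rademacher complexity over $n^g_+$ ``blocks''.

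Next I would bound the Rademacher average of the linear class. For $f_{\mathbf{w}}(\obs)=\mathbf{w}^\top\obs$ with $\|\mathbf{w}\|\leq\Lambda$ and $\|\obs\|\leq R$, the standard Cauchy--Schwarz / Jensen argument gives $\E_\sigma[\sup_{\|\mathbf{w}\|\leq\Lambda}\sum_{i} \sigma_i \mathbf{w}^\top X_i] \leq \Lambda \E_\sigma\|\sum_i \sigma_i X_i\| \leq \Lambda\sqrt{\sum_i \|X_i\|^2}\leq \Lambda R\sqrt{n}$ for a block of size $n$. Carrying this through the cover bookkeeping with blocks of size at most $n^g_-$ and $n^g_+$ blocks yields a term of order $\frac{1}{n^g_+ n^g_-}\cdot n^g_+ \cdot \Lambda R \sqrt{n^g_-} = \frac{\Lambda R}{\sqrt{n^g_-}}$, or symmetrically $\frac{\Lambda R}{\sqrt{n^g_+}}$; choosing the orientation that puts $n^g_+$ under the square root (since in this imbalanced-treatment setting the positive class is the rare one) gives the leading term $\sqrt{R^2\Lambda^2/n^g_+}$ in \eqref{eq:R_LF_bound}.

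The second summand $\sqrt{\log(2/\delta)/(2n^g_+)}$ should come from a concentration step: the quantity inside the Rademacher expectation, viewed as a function of the i.i.d. sample $S^g$, has bounded differences (changing one example of $S^g$ perturbs a controlled number of pairs), so McDiarmid's inequality — or more precisely its fractional-cover version from \cite{ralaivola2015entropy} — lets us pass from the expected local fractional Rademacher complexity to its empirical counterpart (or vice versa) at the cost of an additive $\sqrt{\log(2/\delta)/(2n^g_+)}$ term, holding with probability $1-\delta/2$. The variance constraint $\variance f\leq r$ only helps here (it can only shrink the supremum), so it does not worsen the bound; one simply discards it for the upper bound, which is why $r$ does not appear on the right-hand side of \eqref{eq:R_LF_bound}.

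The main obstacle I anticipate is the cover bookkeeping: making precise how the weights $\omega_j$ and the block sizes interact so that the normalization $1/(n^g_+ n^g_-)$ combines with $\Weight(\cover)=n^g_+$ and block size $n^g_-$ to leave exactly $1/\sqrt{n^g_+}$, and ensuring the bounded-differences constant is genuinely $O(1/n^g_+)$ rather than something larger because a single example appears in up to $n^g_-$ pairs. This is exactly the kind of step where the fractional-cover machinery of \cite{usunier2006generalization,Janson04RSA} does the heavy lifting, so I would lean on their lemmas rather than redo the combinatorics from scratch; the linear-class Rademacher bound and the final McDiarmid application are then routine.
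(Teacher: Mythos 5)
Your proposal follows essentially the same route as the paper: decompose $\mathcal{T}(S^g)$ via an exact proper fractional cover into independent sets of pairs, bound the per-block Rademacher average of the norm-bounded linear class by $\Lambda R/\sqrt{|\Cset_j|}$ (the paper cites Theorem 4.3 of Mohri et al., which is exactly your Cauchy--Schwarz/Jensen computation), obtain the $\sqrt{\log(2/\delta)/(2n^g_+)}$ term from the McDiarmid-type passage between true and empirical Rademacher complexities (Eq.~3.14 of Mohri et al., applied blockwise), and discard the variance constraint since it only shrinks the supremum. One slip to fix: the fractional chromatic number of the pair-dependency graph is $\max(n^g_+,n^g_-)$, not $\min(n^g_+,n^g_-)$ --- by K\"onig's edge-colouring theorem $K_{n^g_+,n^g_-}$ decomposes into $\max(n^g_+,n^g_-)$ matchings each of size at most $\min(n^g_+,n^g_-)$, so the orientation is not a free choice: with positives as the minority class the only admissible cover is the paper's, namely $n^g_-$ blocks of size $n^g_+$, which is what forces $n^g_+$ under the square root in \eqref{eq:R_LF_bound}.
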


\textit{Sketch of proof}. The proof is based on the linearity of expectation that appears in the definition of $\rademacher^S(\Fset_{r})$ (Def. \ref{def:lfrc}) and follows from Equation 3.14 defined in \cite[p.~36]{mohri2018foundations} on the connection between true and empirical Rademacher complexities, and Theorem 4.3 \cite[p.~77]{mohri2018foundations}. The full proof is given in the supplementary material. 
Remark that this bound could be extended to non-linear models using a appropriate estimator of $\rademacher^{S^g}(\Fset_{r})$ \cite{barron2019complexity}.

Finally, we apply Cauchy-Swartz and Popoviciu's inequalities to bound the variance of any function $f\in\Fset_{r}$, $\variance f$,  by $r = \Lambda^2 R^2$. Noting that $R$ is a constant of the dataset we can transform the optimization problem in $(\theta, r)$ in (Eq. \ref{eq:auuc_loss_raw}) to a problem in $(\mathbf{w}, \Lambda)$. Furthermore, the constraint on the weights $\Lambda$ can be considered in practice as a max-norm regularizer \cite{srivastava2014dropout} and considered as a hyperparameter of the model.

From these settings, and the definition of a surrogate loss over the instantaneous ranking loss $s:\mathbb{R}\rightarrow \mathbb{R}_+$, the version of the optimization problem (\ref{eq:auuc_loss_raw}) that we consider is now:

\begin{centering}
\resizebox{0.95\hsize}{!}{%
\begin{tcolorbox}[top=0pt,left=0pt,right=0pt,bottom=0pt,colback=gray!5!white,colframe=gray!80!white,title={\auucmax{} optimization problem},fontupper=\small]
\begin{small}
\begin{align}
\begin{split}
\label{eq:auuc_loss}
& \tiny{\min_{\mathbf{w}}  \hat{\mathcal{L}}_{\mathbf{w}}(S^T,\tilde{S}^C) = \frac{1}{n^T_+n^T_-}\!\!
\sum_{\obs_i^{+1}\in S^T} \!\sum_{\obs_j^0\in S^T} s(\mathbf{w}^\top \!\mathbf{x}_i-\mathbf{w}^\top \!\mathbf{x}_j)} \\
 & {\footnotesize + \frac{1}{n^C_+n^C_-} \!\!\sum_{\obs_k^{+1}\in \tilde{S}^C} \!  \sum_{\obs_l^0\in \tilde{S}^C} \!\!s(\mathbf{w}^\top\! \mathbf{x}_k\!-\!\mathbf{w}^\top\! \mathbf{x}_l)\! +\!  C_\delta(\Fset_{\Lambda^2R^2},\!S^T\!\!,\!\tilde{S}^C)} \\
& ~~~~~~~~~~~~~~~~~~~~\textrm{subject to} \quad  \|\mathbf{w}\| \leq \Lambda\\
\end{split}
\end{align} 
\end{small}
\end{tcolorbox}
}
\end{centering}

      \begin{algorithm}[t!]               
        \caption{\textbf{Linear \auucmax}}
              \textbf{Input:} Train set $S$, penalty grid $G_\Lambda$, optimizer grid $G_\eta$, surrogate $s: \mathbbm{R} \mapsto \mathbbm{R}_+$\\
            \ForAll{$\Lambda, \eta \in (G_\Lambda, G_\eta)$}{
                $\mathbf{w} \leftarrow Optimize(\hat{\mathcal{L}}(S^T,\tilde{S}^C), \eta, s)$ \\
                $b_{\mathbf{w},\Lambda} \leftarrow AUUCLowerBound(\mathbf{w}, \Lambda, S)$ \\
                $\mathbf{w}^* \leftarrow \mathbf{w}_{\Lambda,\eta}$ and $\Lambda^* \leftarrow \Lambda$ iff $b_{\Lambda,\eta} > b_{\mathbf{w}^*, \Lambda^*}$\\
           }
        \textbf{Output:} final weights $\mathbf{w}^*$ 
            \label{algo:auuc_max}
\end{algorithm}

The derived algorithm minimizing loss (\ref{eq:auuc_loss}) is called \auucmax{} and its pseudo code is presented in Algorithm \ref{algo:auuc_max}. At a high level we decompose the optimization problem in $(\mathbf{w}, \Lambda)$ of (Eq. \ref{eq:auuc_loss}) by choosing a grid of values for $\Lambda$ and make use of the generalization guarantees of the bound to select the best model $\mathbf{w}^*$ by its lower bound $b_{\mathbf{w}^*, \Lambda^*}$. Theoretically, a joint or alternate optimization over $(\mathbf{w}, \Lambda)$ is also possible. Interestingly, a small grid $G_\Lambda$ is sufficient in practice to obtain competitive performance (see Section \ref{sec:experiments}).

Note that the usual practice for \ITE{} models (see Fig.~\ref{fig:treatment_optim}) is to iterate over parameter grids (e.g. for optimization and regularization) and select the best model by estimating the mean empirical \auuc{} over a $k$-fold cross-validation: this implies an inner "for" loop in place of $AUUCLowerBound()$ and additional computations.

\subsection{Related Works}
\label{ssec:relatedworks_discussion}
In this section, we review some related works that address the problems of \auuc{} maximization and the generalization study of \ITE{}.

\textbf{SVM for Differential Prediction} \cite{kuusisto2014support} proposes to maximize \auuc{} directly by expressing it as a weighted sum of two AUCs and maximizing it using a Support Vector Machine objective. 
Our work bears similarity to their seminal work by borrowing the idea of decomposing \auuc{} into a weighted sum of AUCs. We further propose to optimize differentiable surrogates of the objective in the case of imbalanced treatment, and provide generalization bounds as well as an efficient hyperparameter tuning procedure.

\textbf{Promoted Cumulative Gain} \cite{devriendt2020learning} draw a learning to rank formulation of \auuc{} similar to ours and use the LambdaMART \cite{burges2010ranknet} algorithm to optimize it, alleviating the need for derivable surrogates at the price of more complex models.

\textbf{Representation learning for \ITE{} prediction} Important work has been published recently using a broad family of methods to perform \ITE{} prediction in the observational case (no randomized treatment and therefore selection bias): CFRNet \cite{shalit2017estimating}, CRN \cite{bica2020estimating}, CEVAE \cite{louizos2017causal}, BNN \cite{johansson2016learning}, GANITE \cite{yoon2018ganite}, DeepMatch \cite{kallus2018deepmatch}. Typically these methods revolve around ways to lessen the covariate shift between $P_{X|T=1}$ and $P_{X|T=0}$ due to treatment selection bias. E.g. CFRNet and BNN add a penalty in the loss to control distribution distance between embeddings $\phi(P_{X|T=1})$ and $\phi(P_{X|T=0})$, CEVAE models hidden confounders that biases treatment. 
Then in our setting we already have $X \!\perp\!\!\!\perp T$ in the learning data by design so we expect that these methods perform comparably to regular \ITE{} models.

\textbf{Generalization bounds} The work of \cite{shalit2017estimating} provide a bound for the PEHE metric (so usable for simulation settings) and pioneered the use of generalization bounds for \ITE{} prediction. More closely to our work; \cite{Yamane2018} proposed a generalization bound for uplift prediction. However, the main differences with our approach is that the upper-bound of \auuc{} proposed in \cite{Yamane2018} is a MSE-like proxy that is applicable in the case where the variables $Y$ and $T$ are never observed together whereas we bound \auuc{} directly without such hypothesis. Further, the definition of the proxy objective proposed in \cite{Yamane2018} assumes that samples are i.i.d.; whilst in our study the equivalence between the ranking objective \eqref{eq:auc} and the classification error over the pairs of examples \eqref{eq:EmpLossMC2} gives rise to the consideration of dependent samples  that calls for specific concentration inequalities, namely \emph{local, fractional} Rademacher theory, that ensures fast convergence rates \cite{bartlett2005}. Finally, from an optimization perspective the approach developed in \cite{Yamane2018} leads to a mini-max optimization problem, that is avoided in \auucmax{} by using the "revert label in control" trick. 

\section{Experimental setup and results} 
\label{sec:experiments}
In this section, we provide comparisons between the proposed approach and the state-of-the-art models.

\textbf{Experimental Setting}. We use two open, real-life datasets. 
\emph{Hillstrom} \cite{Hillstrom2008} contains results of an e-mail campaign for an Internet retailer. Treatment (receiving promotional e-mails) is balanced and independent of co-variates; outcome is visiting the retailer website. This dataset is a classical benchmark for \UM and uses the \auuc{} performance metric (Eq. \ref{eq:auc}).
For each method (except PCG/NDCG from \cite{devriendt2020learning}) we repeat (training, then hyperparameter selection on validation then testing) over 100 random train+validation/test splits; hyperparameter grids for each method are of the same size. For MML we follow \cite{Yamane2018} and draw separate samples as is prescribed in their empirical study. Experimentation code and more details are provided in the supplementary.
\emph{Jobs} \cite{lalonde1986evaluating} is a classical benchmark for \ITE{} prediction. It is composed of observational and randomized data on the effect of job training on income and employment.
For this task we follow closely the experimental setting described in \cite{shalit2017estimating} and use the \emph{policy risk} metric:
\begin{equation}
\label{eq:policy_risk}
\mathcal{R}_{\mathrm{pol}}(\pi_{f}, \alpha)= 1 - \mathbb{E}[Y \mid T=1;\pi_{f}(x)>\alpha] \cdot P(\pi_{f}>\alpha) - \mathbb{E}[Y \mid T=0;\pi_{f}(x) \leq \alpha] \cdot P(\pi_{f}\leq \alpha)
\end{equation}
to measure the risk to treat a ratio $\alpha$ of the population based on a policy $\pi_f$ that selects individuals with highest predictions from model $f$.

\textbf{Local, fractional Rademacher gives tightest bounds.}
As a preliminary we examine the choice of local, fractional Rademacher theory to form a bound. For that purpose we compute the generalization error on the \emph{Hillstrom} dataset for different variants of Th. \ref{thrm:AUUC}: using the local, fractional Rademacher concentration inequality on AUC from \cite{ralaivola2015entropy} (our proposition, in \textcolor{blue}{blue} on Fig.~\ref{fig:auuc_bounds}) or \cite{agarwal2005generalization,Usunier:1121} (in \textcolor{orange}{orange}) or \cite{freund2003efficient} (in \textcolor{green}{green}). We observe that our bound makes an average error of $0.02$, which is much tighter than the alternatives. This result illustrates the benefit of a data-dependent analysis framework for dependent variables.

\begin{figure}[t]
    \centering
    \includegraphics[width=.6\linewidth]{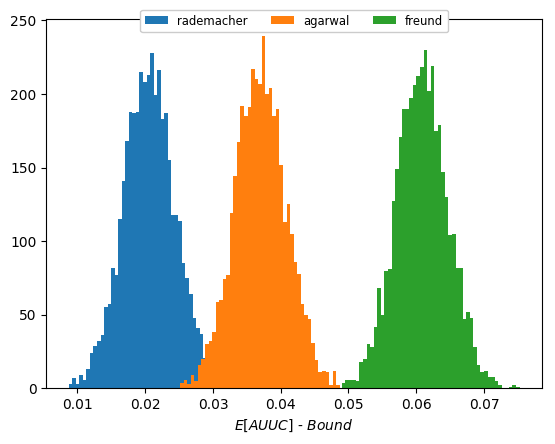}
    \captionsetup{width=.95\linewidth}
    \caption{
        \textbf{Gap between $\E[AUUC]$ and different versions of the bound} (closer to $0$ is better).
    }
    \label{fig:auuc_bounds}  
\end{figure}

\textbf{Polynomial surrogate performs best}. 
The bottom lines of Table \ref{tab:auuc_perf_summary} show the performance of different \auucmax{} variants for the $s_{poly}$ and $s_{log}$ surrogates. For both the cross-validation or bound versions the $s_{poly}$ surrogate is giving better results, indicating it should be the default choice.

\begin{table}[t]
\captionof{table}{\textbf{Test \auuc{} on \emph{Hillstrom}}: comparison of baselines and \auucmax{}. (\textbf{-}): significantly lower than \auucmax{} ($s_{poly}$) by binomial test at 95\%. 
}
\begin{center}
    \label{tab:auuc_perf_summary}
    \begin{tabular}{l|l|r|r}
    Model & Test \auuc{} $\pm2\sigma$& \#params & Time\\
    \midrule
    TM (Eq. \ref{eq:twomodels}) &  .03019 $\pm$ .00652 (\textbf{-})& 46 & 1.00x\\
    CVT (Eq. \ref{eq:Reverted}) &  .03035 $\pm$ .00599 (\textbf{-})& 23 & 0.53x\\
    SVM-DP \cite{kuusisto2014support}  &  .03047  $\pm$ .00606 (\textbf{-})& 23 & 0.49x\\
    DDR \cite{betlei2018uplift}  &  .03042 $\pm$ .00611 (\textbf{-})& 47 & 1.31x\\
    SDR \cite{betlei2018uplift}  &  .03079 $\pm$ .00633 & 67 & 1.10x\\
    MML \cite{Yamane2018} & .02222 $\pm$ .00869 (\textbf{-}) & 46 & >10x\\
    TARNet \cite{shalit2017estimating}  & .03044 $\pm$ .00626 (\textbf{-}) & 22,402 & 6.00x\\
    GANITE \cite{yoon2018ganite}  & .02916 $\pm$ .00818 (\textbf{-}) & 7,045 & > 10x \\
    PCG \cite{devriendt2020learning} & .03063 $\quad \quad$ N/A & $\approx$ 5,000 & N/A\\
    NDCG \cite{devriendt2020learning} & .02954 $\quad \quad$ N/A & $\approx$ 5,000 & N/A\\
    \midrule
    \auucmax ($s_{poly}$) + CV & .03071 $\pm$ .00608 & 23 & 0.66x\\
    \auucmax ($s_{poly}$) & .03065 $\pm$ .00612 & 23 & 0.17x\\
    \auucmax ($s_{log}$) + CV & .03023 $\pm$ .00619 (\textbf{-})& 23 & 0.69x \\
    \auucmax ($s_{log}$) & .03024 $\pm$ .00614 (\textbf{-})& 23 & 0.18x \\
    \end{tabular}
\end{center}
\end{table}

\textbf{Tuning parameters by bound is efficient}.
\label{ssec:model_sel}
Following \cite{ShaweTaylorBounds07} we compare \auucmax{} with hyperparameters chosen by bound versus chosen by cross-validation (+CV) in Table \ref{tab:auuc_perf_summary}. Models tuned by either methods are practically equivalent (up to the 4th digit) whilst the bound method yields computation savings in $\mathcal{O}(k)$ where $k$ is the number of folds.

\textbf{\auucmax{} is competitive in practice}.
Table \ref{tab:auuc_perf_summary} contains quantitative performance results of \auucmax{} and a large selection of competitive baselines on \emph{Hillstrom}. 
First we remark that, in line with previous studies \cite{Diemert2018, devriendt2020learning, kuusisto2014support, Jaskowski2012}, it is difficult to observe statistically significant results on this task. Nonetheless, small increases in \auuc~can lead to important gains in the application \cite{radcliffe2011real}.
We note that \auucmax ($s_{poly}$) ranks 2nd, indicating that our method is competitive. It is only beaten by SDR, a multi-task \ITE{} method.


\textbf{Model complexity and computing efficiency}. Fig. \ref{fig:auuc_perf_hillstr} represents the relation between model complexity and performance for the best \auucmax{} variant and baselines on \emph{Hillstrom} dataset. We notice that our method is the simplest to give as good results, only outperformed by SDR (3x more parameters). PCG yields slightly lower performance with 200x more parameters. Moreover, when comparing the training time in the last column of Table \ref{tab:auuc_perf_summary} (relative to TM) it is striking that our method achieves the smallest training time overall while performing second. SDR is 6 times slower.
\begin{figure}[h!]
    \centering
    \includegraphics[width=.6\linewidth]{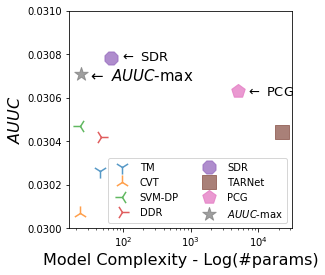}
    \captionof{figure}{{\textbf{Test \auuc{} vs Model complexity}} on \emph{Hillstrom}. (higher \auuc{}, lower complexity is better)}
    \label{fig:auuc_perf_hillstr}
\end{figure}


\begin{figure}[h!]
    \centering
    \includegraphics[width=.6\linewidth]{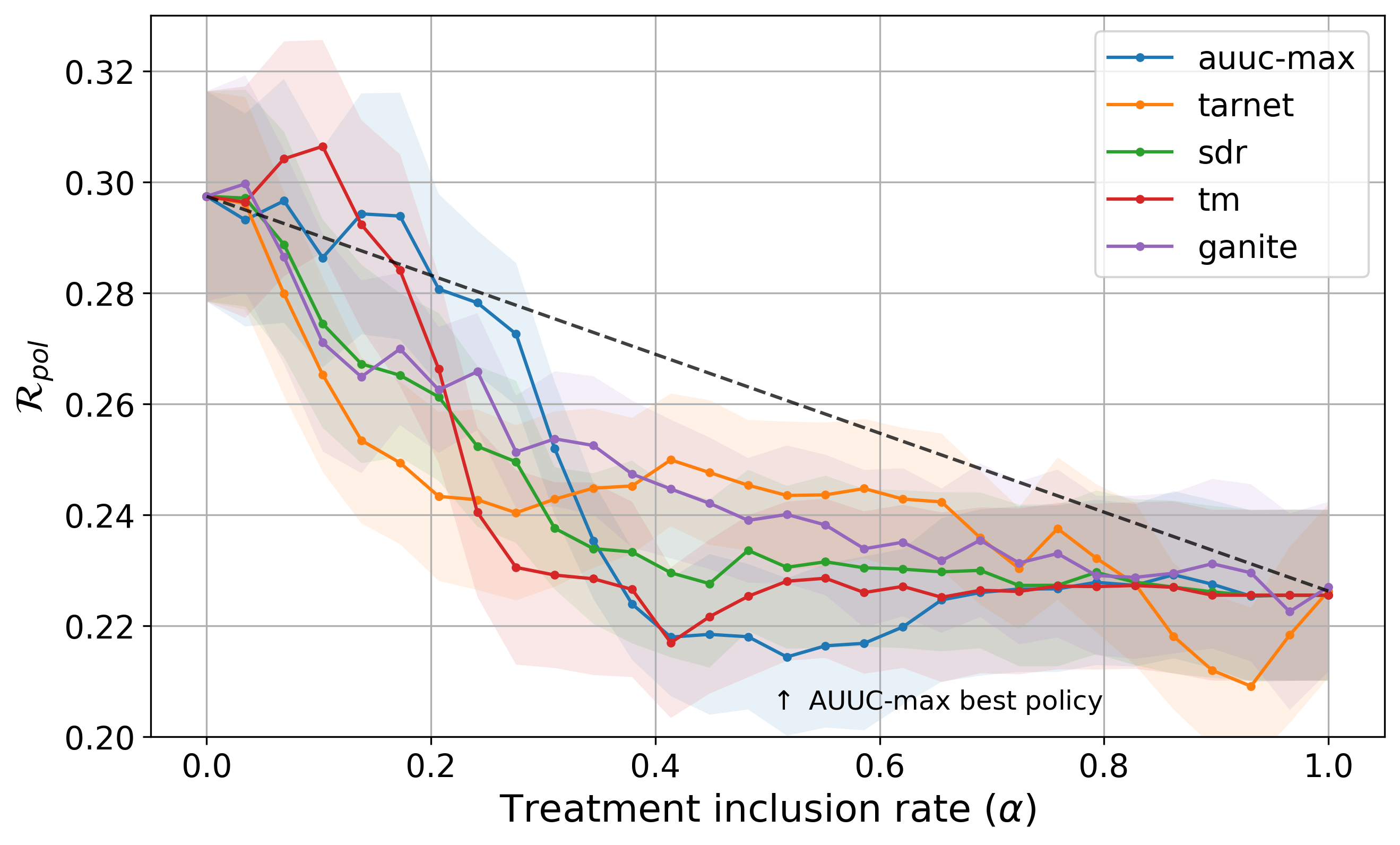}
    \captionof{figure}{{\textbf{Policy Risk}} on Jobs. (lower is better)}
    \label{fig:pr_perf_summary}
\end{figure}

\textbf{\auucmax{} is competitive on related tasks}. 
If \auucmax{} is indeed able to target treatment for maximum individual benefit we should expect that it would be competitive on an alternative metric: \emph{policy risk} (Eq. \ref{eq:policy_risk}), even though the method is not optimizing it directly.
Results on Fig.~\ref{fig:pr_perf_summary} show that \auucmax{} is able to find an efficient treatment assignment policy for $\alpha \approx .5$ (indicated by $\uparrow$ on the figure) and yields less risk than most baselines. Only TARNet manages to find policies with a similar risk but for larger $\alpha$, meaning when treating more individuals in the population.
Now remark that it is often costly to treat a large portion of the population. So the best policy found by TARNet at $\alpha \approx .95$ is probably unfeasible in practice and needs to treat 40\% more individuals to reach the same risk. In that light we shall conclude that \auucmax{} finds the best usable policy for most applications.

\section{Conclusion and future works}
\label{sec:conclusion}
We propose the first, data-dependent generalization lower bound for the treatment assignment optimization metric, \auuc, used in numerous practical cases. 
Then we derive a robust learning objective that optimizes a derivable surrogate of the \auuc{} lower bound. 
Our method alleviates the need of cross-validation for choosing regularization and optimization parameters, as we empirically show. As a result we highlight its simplicity and computational benefits. 
Experiments show that our method is competitive with the most relevant baselines from the literature, all methods being properly and fairly tuned. 
An exciting area for future works would be to research a version of the bound usable for deep models. In light of our experiments we conjecture that adding a representation learning component to \auucmax{} could improve performance beyond what is possible today while keeping good properties: generalization guarantees and hence thrifty use of computing resources.

\bibliographystyle{unsrt}  
\bibliography{neurips}  






\newpage
\section*{Appendix}

\subsection*{Proofs}
Here we provide the proofs of Proposition 1, Theorem 1 and Proposition 2, stated in the paper.

\begin{pro}
\label{prop:one}
Let $AUUC(f,S^T,S^C)$ be the empirical area under uplift curve of the model $f$ on the sets $S^T$ and $S^C$; and $AUUC(f)=\mathbb{E}_{S^T,S^C} \left[AUUC(f,S^T,S^C)\right]$ be its expectation. Then $AUUC(f)$ is related to ranking loss (Eq. \ref{eq:auc}) as:
\begin{equation}
\begin{aligned}
\label{eq:auuc_risks} 
AUUC(f) = \gamma-\lambda^T\mathbb{E}_{S^T}[\hat{\risk}(f,S^T)] - \lambda^C\mathbb{E}_{\tilde{S}^C}[\hat{\risk}(f,\tilde{S}^C)],
\end{aligned}
\end{equation}
where,
\begin{equation}
	\hat{\risk}(f,S^g) \triangleq \frac{1}{n^g_+n^g_-}\sum_{(\obs_i,+1)\in S^g}\sum_{(\obs_j,0)\in S^g}\indicatrice_{f(\obs_i)<f(\obs_j)}
	\label{eq:auc}
\end{equation}
is the empirical bipartite ranking risk, $g\in\{T,C\}$, $n^g_+,n^g_-$ are the amounts of positives and negatives respectively in the set $S^g$ (i.e. $n^g=n^g_-+n^g_+$), and $\gamma = \bar{y}^T - \frac{(\bar{y}^T)^2}{2} - \frac{(\bar{y}^C)^2}{2}$. 
\end{pro}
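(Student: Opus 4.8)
The plan is to follow the two-phase route announced in the sketch. Phase~1 relates the empirical \auuc{} to the within-group Mann--Whitney (AUC) statistics of the treatment and control subsets, borrowing the curve-manipulation identities of \cite{surry2011quality,tuffery2011data}; Phase~2 applies the ``revert the control labels'' trick to turn the resulting \emph{difference} of AUCs into a \emph{sum} of ranking risks, which is the form the later union bound of Theorem~\ref{thrm:AUUC} consumes. Throughout I would work under the continuous-score / no-ties convention, so that $\indicatrice_{f(\obs_i)<f(\obs_j)}$ and $\indicatrice_{f(\obs_i)>f(\obs_j)}$ are complementary; ties form a null event and are harmless once the expectation is taken (and the direction of the $\le$ vs.\ $<$ convention is precisely what will later make Theorem~\ref{thrm:AUUC} a genuine \emph{lower} bound).

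\emph{Phase 1: from the uplift curve to a weighted difference of AUCs.} Starting from Definition~\ref{def:auuc}, I would rewrite $\auuc(f,S^T,S^C)=\sum_{k=1}^n V(f,k)$ by exchanging the two summations, so that each individual $i$ contributes its outcome $y_i$ weighted by $1+\#\{m:f(\obs_m)<f(\obs_i)\}$, its position counted from the bottom of the \emph{joint} ranking, with sign and normalization ($1/|T|$ or $1/|C|$) dictated by its group. Splitting the inner count according to whether $m$ is treated or control and according to the label of $m$ decomposes the whole quantity into: (a) treatment-internal opposite-label comparisons, which by the Mann--Whitney identity equal $n_+^Tn_-^T\,\mathrm{AUC}^T_S=n_+^Tn_-^T\bigl(1-\hat{\risk}(f,S^T)\bigr)$; (b) the analogous control-internal term $n_+^Cn_-^C\,\mathrm{AUC}^C_S$; and (c) same-label and cross-group comparisons, whose contribution is purely combinatorial in the group counts. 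Taking $\E_{S^T,S^C}$, the internal opposite-label terms produce $\bar y$-dependent weights — $\E[n_+^gn_-^g/|g|]$ contracts to a multiple of $\lambda^g=\bar y^g(1-\bar y^g)$ — times the population AUCs, while the combinatorial and cross-group pieces collapse to a constant: this is exactly where $X\indep G$ enters, since the equal marginals $P_{X\mid T=1}=P_{X\mid T=0}$ make every cross-group score comparison symmetric (probability $1/2$), so no model-dependent quantity survives from them. The output of this phase is an identity $\auuc(f)=c+\lambda^T\,\mathrm{AUC}^T(f)-\lambda^C\,\mathrm{AUC}^C(f)$ with $c=\tfrac12(\bar y^T)^2-\tfrac12(\bar y^C)^2$.

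\emph{Phase 2: revert the control labels.} This step is mechanical. Swapping positive and negative labels inside the control set turns the control Mann--Whitney statistic into a ranking \emph{risk}: one checks directly that $\hat{\risk}(f,\tilde{S}^C)$ — which counts negative-above-positive pairs of $\tilde{S}^C$, i.e.\ original-positive-above-original-negative pairs — is \emph{identically} $\mathrm{AUC}^C_S$, hence $\E_{\tilde{S}^C}[\hat{\risk}(f,\tilde{S}^C)]=\mathrm{AUC}^C(f)$; whereas on the unreverted treatment set $\mathrm{AUC}^T(f)=1-\E_{S^T}[\hat{\risk}(f,S^T)]$ by ordinary complementarity. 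Substituting both into the Phase-1 identity gives $\auuc(f)=c+\lambda^T\bigl(1-\E_{S^T}[\hat{\risk}(f,S^T)]\bigr)-\lambda^C\,\E_{\tilde{S}^C}[\hat{\risk}(f,\tilde{S}^C)]$, and since $\lambda^T=\bar y^T-(\bar y^T)^2$ the free constant is $c+\lambda^T=\bar y^T-\tfrac12(\bar y^T)^2-\tfrac12(\bar y^C)^2=\gamma$, which is the claimed statement.

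\emph{Main obstacle.} The delicate part is the expectation bookkeeping inside Phase~1, not Phase~2. Three points need care. First, because the top-$k$ set is a joint ranking, an individual's rank weight $1+\#\{m:f(\obs_m)<f(\obs_i)\}$ mixes both groups; isolating the two within-group AUCs requires the design independence $X\indep G$ to argue that the residual cross-group comparisons contribute nothing model-dependent. Second, the group-wise counts $n_+^g,n_-^g$ are themselves random (conditionally binomial given $|g|$), so one must take expectations of ratios such as $n_+^gn_-^g/|g|$ and $\binom{n_+^g}{2}/|g|$; this is the step that pins down the weights $\lambda^g$ and the constant $\gamma$, and it is why one should reason from the exact algebraic form of the ``joint, relative'' estimator of \cite{surry2011quality} (and its treatment in \cite{devriendt2020learning}) rather than from an ad hoc uplift curve. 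Third, one must confirm that the discrepancy between the $\le$ in \eqref{eq:auc} and the strict $<$ used for the equality affects only a null (tie) event, so the identity is unaffected in expectation. Once Phase~1 is established, Phase~2 follows by the substitutions above.
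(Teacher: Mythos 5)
Your route differs from the paper's in Phase 1: the paper never expands Definition~\ref{def:auuc} combinatorially, but instead invokes the cumulative-outcome-rate identity $V(f,x)=F^{S^T}_f(x)-F^{S^C}_f(x)$ of \cite{surry2011quality} together with the Gini--AUC relations of \cite{surry2011quality,tuffery2011data}, which reduces everything to \emph{within-group} quantities from the outset. Your direct Mann--Whitney expansion of $\sum_k V(f,k)$ is a legitimate alternative, and your Phase 2 (label reversion in the control group, and the constant bookkeeping $c+\lambda^T=\gamma$) coincides with the paper's. But Phase 1 as you state it contains a genuine error: the cross-group comparisons are \emph{not} model-independent, and discarding them yields the wrong coefficients.

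Concretely, the cross-group piece generated by the treated positives is $\frac{1}{|T|}\sum_{i\in T,\,y_i=1}\#\{m\in C: f(\obs_m)<f(\obs_i)\}$. Here $\obs_i$ is conditioned on $y_i=1$, so it is distributed as $P_{X\mid T=1,Y=1}$, while $\obs_m\sim P_{X\mid T=0}=P_X$; these two are not exchangeable, so the comparison is not symmetric. Writing $P_X=\bar y^T P_{X\mid T=1,Y=1}+(1-\bar y^T)P_{X\mid T=1,Y=0}$ gives, in expectation,
\begin{equation*}
\proba\bigl(f(\obs_m)<f(\obs_i)\bigr)=\tfrac{\bar y^T}{2}+(1-\bar y^T)\,AUC(f,S^T),
\end{equation*}
so after normalization this piece contributes $\frac{|C|}{n}\bigl(\lambda^T AUC(f,S^T)+\tfrac{(\bar y^T)^2}{2}\bigr)$ --- the same order as the within-group term $\frac{|T|}{n}\lambda^T AUC(f,S^T)$, and in fact dominant when treatment is rare. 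Only the sum of the two produces the ratio-free coefficient $\lambda^T$ appearing in the statement; if the cross-group terms really collapsed to a constant you would obtain coefficients $\frac{|T|}{n}\lambda^T$ and $\frac{|C|}{n}\lambda^C$, which contradicts the claimed identity (and the estimator's advertised robustness to treatment imbalance). The independence $X\indep G$ is thus used not to annihilate the cross-group terms but to \emph{evaluate} them; with that correction your combinatorial derivation does close and reproduces the paper's result.
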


\begin{proof}
Let us remind the Definition 1 from the main paper for the empirical estimate of the AUUC measure of function $f$ over datasets $S^T$ and $S^C$~:
\begin{equation*}
AUUC(f, S^T,S^C) = \int_0^1 V(f,x)dx
\end{equation*}

\cite[Eq. 13]{surry2011quality} allows us to express $V(f,x)$ as a difference of \textit{cumulative outcome rates}
(for the formal definition please refer to \cite{surry2011quality}) of collections $S^T$ and $S^C$ respectively, induced by model $f$:

$$V(f,x) = F^{S^T}_f(x) - F^{S^C}_f(x)$$
Hence,
\begin{align}
AUUC(f, S^T,S^C) & = \int_0^1 V(f,x)dx = \int_0^1 \Bigl(F^{S^T}_f(x) - F^{S^C}_f(x)\Bigr)dx \nonumber\\
&= \int_0^1 F^{S^T}_f(x)dx - \int_0^1 F^{S^C}_f(x)dx \label{AUUC1}
\end{align}


By the mean while, we have from \cite[Eq. 9]{surry2011quality} a connection between $F_f^\mathcal{D}(x)$ and \textit{Gini coefficient} $G(f,\mathcal{D})$ over the dataset $\mathcal{D}$ which is:

\begin{equation}
    \label{Gini1}
G(f,\mathcal{D}) = \frac{2 \int_{0}^{1} F_f^\mathcal{D}(x) dx-\bar{y}^\mathcal{D}}{\bar{y}^\mathcal{D}(1-\bar{y}^\mathcal{D})}
\end{equation}
where $\bar{y}^\mathcal{D}$ is average outcome rate on $\mathcal{D}$. The Gini coefficient $G$ is also related to the area under ROC curve as follows \cite{tuffery2011data}:

\begin{equation}
    \label{Gini2}
G(f,\mathcal{D}) = 2 AUC(f,\mathcal{D}) - 1
\end{equation}

From \eqref{Gini1} and \eqref{Gini2}, it then comes~:
\begin{equation}
\label{Integral}
    \int_{0}^{1} F_f^\mathcal{D}(x) dx=\bar{y}^\mathcal{D}(1 - \bar{y}^\mathcal{D}) \cdot AUC(f, \mathcal{D}) + \frac{(\bar{y}^\mathcal{D})^2}{2}
\end{equation}

From \eqref{AUUC1} and \eqref{Integral} it comes~: 
\begin{flalign*}
AUUC(f, S^T,S^C) = \bar{y}^T(1 - \bar{y}^T) \cdot AUC(f, S^T) - \bar{y}^C(1 - \bar{y}^C) \cdot AUC(f, S^C) + \frac{(\bar{y}^T)^2}{2} - \frac{(\bar{y}^C)^2}{2}
\end{flalign*}
Now by reverting labels in the dataset $S^C$; i.e. $AUC(f, S^C)=(1 - AUC(f, \tilde{S}^C))$ we get
\begin{flalign*}
&AUUC(f, S^T,S^C) = \bar{y}^T(1 - \bar{y}^T) AUC(f, S^T) + \bar{y}^C(1 - \bar{y}^C)  \Bigl(1 - AUC(f, \tilde{S}^C)\Bigr) + \frac{(\bar{y}^T)^2}{2} - \frac{(\bar{y}^C)^2}{2} &&\\
&= \bar{y}^T(1 - \bar{y}^T) \cdot AUC(f, S^T) + \bar{y}^C(1 - \bar{y}^C) \cdot AUC(f, \tilde{S}^C) + \frac{(\bar{y}^T)^2}{2} + \frac{(\bar{y}^C)^2}{2} - \bar{y}^C
\end{flalign*}
Using the connection between AUC and the empirical ranking loss; $AUC(f,\mathcal{D}) \ = \ 1 - \hat{R}(f,\mathcal{D})$, we have~:
\begin{flalign*}
AUUC(f, S^T,S^C) &= \bar{y}^T(1 - \bar{y}^T) \cdot \Bigl(1 - \hat{R}(f, S^T)\Bigr) + \bar{y}^C(1 - \bar{y}^C) \cdot \Bigl(1 - \hat{R}(f, \tilde{S}^C)\Bigr) &&\\
&+ \frac{(\bar{y}^T)^2}{2} + \frac{(\bar{y}^C)^2}{2} - \bar{y}^C = \gamma - \Bigl(\lambda^T \hat{R}(f, S^T) + \lambda^C \hat{R}(f, \tilde{S}^C)\Bigr)
\end{flalign*}

where, for sake of notation, we use group $T$ and group $C$ instead of datasets ${S^T}$ and ${S^C}$ in the upper indices of $\bar{y}$; and $\lambda^T = \bar{y}^T(1 - \bar{y}^T), \lambda^C = \bar{y}^C(1 - \bar{y}^C), \gamma = \bar{y}^T - \frac{(\bar{y}^T)^2}{2} - \frac{(\bar{y}^C)^2}{2}.$

By taking the expectations in  both sides of equation we finally get~:
\[
AUUC(f)=\mathbb{E}_{S^T,S^C} \left[AUUC(f,S^T,S^C)\right]=\gamma-\Bigl(\lambda^T\mathbb{E}_{S^T}[\hat{\risk}(f,S^T)]+\lambda^C\mathbb{E}_{\tilde{S}^C}[\hat{\risk}(f,\tilde{S}^C)]\Bigr)
\]
\end{proof}

\begin{theo}
\label{thrm:AUUC}
        Let $S=(\obs_i^{y_i})_{i=1}^m\in (\Input\times \Output)^m$ be a dataset of $m$ examples drawn i.i.d. according to a probability distribution $\Dist$ over $\Input\times \Output$,  and decomposable according to treatment $S^T$ and reverted label control $\tilde{S}^C$ subsets. Let  $\mathcal{T}(S^T)$ and $\mathcal{T}(\tilde{S}^C)$ be the corresponding transformed sets. Then for any $1>\delta>0$ and $0/1$ loss $\ell:\{-1,+1\}\times\mathbb{R}\rightarrow [0,1]$, with probability at least~$(1-\delta)$ the following lower bound holds for all $f\in \mathcal \Fset_{r}$~:
        \begin{eqnarray}
        AUUC(f)\geq \gamma - \Bigl(\lambda^T \hat{R}_\ell(f,S^T)+\lambda^C \hat{R}_\ell(f,\tilde{S}^C)\Bigr)\nonumber 
        - C_\delta(\Fset_r,S^T,\tilde{S}^C) - \frac{25}{48}\biggl(\frac{\lambda^T}{n_+^T}+\frac{\lambda^C}{n_\_^C}\biggr)\log\frac{2}{\delta},
        \end{eqnarray}
{\footnotesize $ C_\delta(\Fset_r,S^T,\tilde{S}^C)\!=\!(\lambda^T\rademacher_{S^T}(\Fset_r)+\lambda^C\rademacher_{\tilde{S}^C}(\Fset_r))+ \!\left(\frac{\!\frac{5}{2}\sqrt{\rademacher_{S^T}(\Fset_r)}+\frac{5}{4}\sqrt{2r}}{\sqrt{n_+^T}}\lambda^T\!\!\!+\!\!\frac{\frac{5}{2}\sqrt{\rademacher_{\tilde{S}^C}(\Fset_r)}+\frac{5}{4}\sqrt{2r}}{\sqrt{n_\_^C}}\lambda^C\right)\!\!\sqrt{\log\frac{2}{\delta}}$} is defined with respect to local Rademacher complexities of the class of functions $\mathcal{F}_r$ estimated over the treatment and the control sets.
\end{theo}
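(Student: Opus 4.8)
The plan is to build on Proposition \ref{prop:one}, which already gives the exact decomposition $AUUC(f)=\gamma-\lambda^T\mathbb{E}_{S^T}[\hat{R}(f,S^T)]-\lambda^C\mathbb{E}_{\tilde{S}^C}[\hat{R}(f,\tilde{S}^C)]$. Since $AUUC(f)$ is being lower-bounded, I need an \emph{upper} bound on each of the two expected ranking risks in terms of its empirical counterpart, its local fractional Rademacher complexity, and a confidence term. The key observation made in Section \ref{ssec:RadGenBnd} is that via the dyadic transformation $\mathcal{T}(S^g)$ and the class $\mathcal{H}$ of difference functions, $\mathbb{E}_{S^g}[\hat{R}(f,S^g)]$ equals the true risk of the induced binary classification problem over dependent pairs, and $\hat{R}(f,S^g)$ is its empirical average. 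So the proof reduces to: (i) invoke the local, fractional Rademacher generalization bound of \cite{ralaivola2015entropy} separately on the treatment graph $\mathcal{T}(S^T)$ and on the reverted-control graph $\mathcal{T}(\tilde{S}^C)$, each at confidence level $1-\delta/2$; (ii) combine the two via a union bound to get confidence $1-\delta$; (iii) substitute into the identity of Proposition \ref{prop:one}; (iv) track the constants carefully so that they collapse into the stated $C_\delta$ and the $\tfrac{25}{48}$ term.

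Concretely, I would first recall the relevant theorem from \cite{ralaivola2015entropy}: for a dependency graph with fractional chromatic number $\chi^*$ (here $\chi^* = \max(n_+^g, n_-^g)$ for the pair graph $\mathcal{T}(S^g)$, since the pairs sharing a fixed positive — or a fixed negative — form the cover classes), the true risk is bounded by the empirical risk plus a term of the form $\rademacher_{S^g}(\Fset_r) + c_1\sqrt{\rademacher_{S^g}(\Fset_r)/n} + c_2\sqrt{r/n} + c_3 (1/n)\log(1/\delta')$, where the $n$ in the denominators is governed by $\chi^*$ — i.e. effectively $n_+^g$ for the treatment group and $n_-^C$ for the reverted control group (the least-represented class, matching the paper's remark about convergence rate). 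The constants $5/2$, $5/4$, $25/48$ in the statement are exactly the optimized numerical constants coming out of the infimum over the free parameter (typically called $\alpha$ or a slack variable) in the entropy/Rademacher bound of \cite{ralaivola2015entropy}; I would reproduce that optimization, plug in the $0/1$ loss range $[0,1]$, and read off the best constants — this is the "finding the optimal constants" step mentioned in the sketch.

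The remaining work is bookkeeping: multiply the treatment-side bound by $\lambda^T$ and the reverted-control-side bound by $\lambda^C$ (both nonnegative, so inequalities are preserved), add them, and negate. The $\lambda^T\rademacher_{S^T}(\Fset_r)+\lambda^C\rademacher_{\tilde{S}^C}(\Fset_r)$ piece and the two $\sqrt{\log(2/\delta)}$-weighted terms assemble into $C_\delta(\Fset_r,S^T,\tilde{S}^C)$; the $\tfrac{25}{48}(\lambda^T/n_+^T + \lambda^C/n_-^C)\log(2/\delta)$ piece is the sum of the two $c_3(1/n)\log(2/\delta')$ terms with $\delta' = \delta/2$. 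Substituting $\hat{R}_\ell$ for $\hat{R}$ (the $0/1$ loss version, which the bound of \cite{ralaivola2015entropy} is stated for) finishes it.

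I expect the main obstacle to be step (ii)/(iv): correctly identifying which of the several concentration inequalities in \cite{ralaivola2015entropy} to instantiate (the \emph{local} fractional Rademacher version, Definition \ref{def:lfrc}, not the global one) and then carrying its free-parameter optimization through with the loss normalized to $[0,1]$ so that the constants match $5/2$, $5/4$, $25/48$ exactly rather than up to unspecified factors. A secondary subtlety is making sure the union bound is applied to the two \emph{already data-dependent} bounds — since $\rademacher_{S^g}(\Fset_r)$ in the statement is itself the empirical local Rademacher complexity, I must confirm that the replacement of the true local Rademacher complexity by its empirical version (the content of Proposition \ref{prop:two}'s underlying argument, Equation 3.14 of \cite{mohri2018foundations}) is either already absorbed into \cite{ralaivola2015entropy}'s statement or contributes an extra $\delta$-split that the $2/\delta$ inside the logarithm accounts for.
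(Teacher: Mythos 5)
Your proposal follows essentially the same route as the paper's own proof: start from the exact decomposition of Proposition \ref{prop:one}, apply the local fractional Rademacher concentration inequality of \cite{ralaivola2015entropy} to each of $\mathcal{T}(S^T)$ and $\mathcal{T}(\tilde{S}^C)$ at level $1-\delta/2$, optimize the free slack parameter (the paper computes $a^g=\frac{5}{4}\sqrt{\log(2/\delta)/(n\,\rademacher_{S^g}(\Fset_r))}$, which yields exactly the $5/2$, $5/4$, $25/48$ constants), and combine by a union bound. Your secondary worry about an extra $\delta$-split for the empirical versus true complexity is moot here, since Theorem \ref{thrm:AUUC} is stated with the (expected) local fractional Rademacher complexity of Definition \ref{def:lfrc} and the empirical replacement is deferred entirely to Proposition \ref{prop:two}.
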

        

\begin{proof}
From Proposition \ref{prop:one}:
\begin{equation}
    AUUC(f) = \gamma-\Bigl(\lambda^T\mathbb{E}_{S^T}[\hat{\risk}(f,S^T)]+\lambda^C\mathbb{E}_{\tilde{S}^C}[\hat{\risk}(f,\tilde{S}^C)]\Bigr)
\end{equation}

From \cite{ralaivola2015entropy}, we have the following upper bounds for each of the ranking losses hold with probability $1-\delta/2$~:
\begin{equation}
\forall \Fset_{r}, \mathbb{E}_{S^T}[\hat R(f,S^T)]\\
\leq  \mathop{\inf}_{a^T>0} \left((1+a^T)\rademacher_{S^T}(\Fset_r)+\frac{5}{4}\sqrt{\frac{2r\log\frac{2}{\delta}}{n_+^T}}+\frac{25}{16}\left(\frac{1}{3}+\frac{1}{a^T}\right)\frac{\log\frac{2}{\delta}}{n_+^T}\right)
\end{equation}
\begin{equation}
\forall \Fset_{r}, \mathbb{E}_{S^C}[\hat R(f,\tilde{S}^C)]\\
\leq  \mathop{\inf}_{a^C>0} \left((1+a^C)\rademacher_{\tilde{S}^C}(\Fset_r)+\frac{5}{4}\sqrt{\frac{2r\log\frac{2}{\delta}}{n_\_^C}}+\frac{25}{16}\left(\frac{1}{3}+\frac{1}{a^C}\right)\frac{\log\frac{2}{\delta}}{n_\_^C}\right)
\end{equation}
The infinimums of the upper-bounds are reached for respectively 
\begin{align*}
a^T&=\frac{5}{4}\sqrt{\frac{\log{\frac{2}{\delta}}}{n_+^T \rademacher_{S^T}(\Fset_r)}}\\
a^C&=\frac{5}{4}\sqrt{\frac{\log{\frac{2}{\delta}}}{n_\_^C \rademacher_{\tilde{S}^C}(\Fset_r)}}
\end{align*}
By plugging back these values into the upper-bounds the result follows from the union bound.

\end{proof}

\begin{pro}
\label{prop:two}
    Let $S^g$ be a sample of size $n^g$ with $n^g_+$ samples with positive labels and such that $\forall \obs\in~S^g; \|\obs\|\leq~R
$. Let $\Fset_{r}=\{\mathbf{x} \mapsto \mathbf{w}^\top \mathbf{x}:\|\mathbf{w}\| \leq \Lambda ; f\in \Fset:\variance f\leq r\}$, be the class of linear functions with bounded variance and bounded norm over the weights. Then for any $1>\delta>0$, the empirical local fractional Rademacher complexity of $\Fset_{r}$ over the set of pairs $\mathcal{T}(S^g)$ of size $n_+^g n_-^g$ can be bounded with probability at least $1-\frac{\delta}{2}$ by~: 
    \begin{equation}
    \label{eq:R_LF_bound} 
    \rademacher_{S^g}(\Fset_{r}) \leq \sqrt{\frac{R^{2} \Lambda^{2}}{n^g_+}} + \sqrt{\frac{\log \frac{2}{\delta}}{2n^g_+}}
    \end{equation}

\end{pro}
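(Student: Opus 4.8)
\textit{Proof proposal.} The plan is to unfold the fractional-cover definition of $\rademacher_{S^g}(\Fset_{r})$ (Def.~\ref{def:lfrc}), reduce the supremum on each independent set to an ordinary Rademacher average of a \emph{linear} class over a sub-sample of distinct points, bound that via Theorem~4.3 of \cite{mohri2018foundations}, and finally pass from the empirical to the stated (data-independent) quantity via the true/empirical Rademacher comparison of Eq.~3.14 in \cite{mohri2018foundations}. Since the variance constraint $\variance f\le r$ only shrinks the class, it can be dropped for the complexity estimate; only $\|\mathbf{w}\|\le\Lambda$ matters.

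First I would pin down the dependency graph. The vertices of $\graph$ are the $n^g_+n^g_-$ cross-class pairs $(\obs,\obs')$ of $\mathcal{T}(S^g)$, two of them adjacent exactly when they share an observation; hence $\graph$ is the line graph of the complete bipartite graph $K_{n^g_+,n^g_-}$, and its independent sets are precisely the matchings of $K_{n^g_+,n^g_-}$. By K\"onig's edge-colouring theorem $K_{n^g_+,n^g_-}$ partitions into $\max(n^g_+,n^g_-)$ matchings, each of size at most $\min(n^g_+,n^g_-)\le n^g_+$; taking these matchings as the $\Cset_j$ with unit weights $\omega_j=1$ yields an exact proper fractional cover with $\Weight(\cover)=\max(n^g_+,n^g_-)$ (assuming, as in the statement, $n^g_+\le n^g_-$; otherwise $n^g_+$ is replaced by $\min(n^g_+,n^g_-)$ throughout).

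Next comes the per-part estimate. On a matching $\Cset_j$ all the observations appearing in its pairs are distinct, and for $f(\obs)=\mathbf{w}^\top\obs$ the pair map is $(\obs,\obs')\mapsto\mathbf{w}^\top(\obs-\obs')$, so
\[
\sup_{\|\mathbf{w}\|\le\Lambda}\ \sum_{i\in\Cset_j}\sigma_i\bigl(\mathbf{w}^\top\obs_i-\mathbf{w}^\top\obs'_i\bigr)
=\Lambda\Bigl\|\sum_{i\in\Cset_j}\sigma_i(\obs_i-\obs'_i)\Bigr\|.
\]
Taking $\expectation_\sigma$, applying Jensen and using the independence and symmetry of the $\sigma_i$ to cancel the cross terms, together with $\|\obs_i-\obs'_i\|\le R$ (this is where $\|\obs\|\le R$ enters, up to the constant it introduces), gives $\expectation_\sigma[\,\cdot\,]\le\Lambda R\sqrt{|\Cset_j|}\le\Lambda R\sqrt{n^g_+}$ — exactly Theorem~4.3 of \cite{mohri2018foundations} read on the $|\Cset_j|$-point sample $\{\obs_i-\obs'_i\}_{i\in\Cset_j}$. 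Substituting into Def.~\ref{def:lfrc}, summing over the $\max(n^g_+,n^g_-)=n^g_-$ parts with unit weights and dividing by $n^g_+n^g_-$ yields the empirical bound $\rademacher_{S^g}(\Fset_{r})\le\frac{n^g_-}{n^g_+n^g_-}\Lambda R\sqrt{n^g_+}=\sqrt{R^2\Lambda^2/n^g_+}$. To obtain the $\sqrt{\log(2/\delta)/(2n^g_+)}$ term I would then treat $\rademacher_{S^g}(\Fset_{r})$ as a function of the i.i.d.\ sample $S^g$: although one observation sits in $n^g_-$ pairs, in the cover it lies in exactly one part per colour class of unit weight, so after the $1/(n^g_+n^g_-)$ normalisation its perturbation effect is of order $1/n^g_+$; McDiarmid's inequality (the true/empirical comparison, Eq.~3.14 of \cite{mohri2018foundations}) then upgrades the empirical bound to the stated one with probability at least $1-\delta/2$.

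I expect the delicate point to be the bounded-differences constant in that last step: the dependency structure couples many pairs to a single observation, and one must use the exact cover (unit-weight matchings, one part per observation per colour) together with the $1/(n^g_+n^g_-)$ normalisation to see that the net per-observation sensitivity is $O(1/n^g_+)$ rather than something that grows with $n^g_-$; pinning down the factor $1/2$ under the root also needs care. A secondary nuisance is the absolute constant in $\|\obs_i-\obs'_i\|\le R$ — a triangle-inequality factor $2$ if one only assumes $\|\obs\|\le R$ — which the statement folds into $R$.
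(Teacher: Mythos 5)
Your proposal follows essentially the same route as the paper's proof: unfold the local fractional Rademacher complexity of Definition 3 over the exact proper fractional cover (which is indeed the partition of the $n^g_+n^g_-$ pairs into $n^g_-$ matchings of size $n^g_+$), bound the per-part Rademacher average of the linear class via Theorem 4.3 of \cite{mohri2018foundations}, and relate empirical to true complexity via Eq.~3.14 of \cite{mohri2018foundations}, which yields exactly the two terms of the stated bound. The only differences are cosmetic: you apply the two lemmas in the opposite order (linear bound first, then a single aggregate concentration step rather than a per-part true/empirical comparison), and you explicitly flag the triangle-inequality factor on $\|\obs_i-\obs'_i\|$ and the bounded-differences constant, both of which the paper's proof silently absorbs.
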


\begin{proof}
\begin{flalign*}
\rademacher_{S^g}(\Fset_r) &\stackrel{\text{Def. 3}}{=} \frac{1}{n^g}\expectation_{\sigma}\left[\sum_{j\in[J]}\omega_j\expectation_{X_{\Cset_j}}\left[\sup_{f\in\Fset_r}\sum_{i\in\Cset_j}\sigma_if(X_i)\right]\right] &&\\ &= [\substack{\text{change order of }\expectation s}] =  \frac{1}{n^g}\sum_{j\in[J]}\expectation_{X_{\Cset_j}}|\Cset_j|\left[\underbrace{\frac{1}{|\Cset_j|}\expectation_{\sigma}\left[\sup_{f\in\Fset_r}\sum_{i\in\Cset_j}\sigma_i f(X_i)\right]}_{{\hat{\rademacher}_{\Cset_j}}(\Fset_r)}\right] &&\\
&= \frac{1}{n^g} \sum_{j\in[J]}|\Cset_j| \underbrace{\expectation_{X_{\Cset_j}}\left[ {\hat{\rademacher}_{\Cset_j}}(\Fset_r)\right]}_{{\rademacher_{\Cset_j}}(\Fset_r)} 
\stackrel{\text{\cite[Eq. 3.14]{mohri2018foundations}}}{\leq} \frac{1}{n^g} \sum_{k=1}^{n^g_\_} n^g_+ \left({\hat{\rademacher}_{\Cset_j}}(\Fset_r) + \sqrt{\frac{\text{log} \frac{2}{\delta}}{2n^g_+}}\right) &&\\
&\stackrel{\text{\cite[Th. 4.3]{mohri2018foundations}}}{\leq} \frac{1}{n^g_+ n^g_\_} \sum_{k=1}^{n^g_\_} n^g_+\left( \sqrt{\frac{R^{2} \Lambda^{2}}{n^g_+}} + \sqrt{\frac{\text{log} \frac{2}{\delta}}{2n^g_+}}\right) 
= \sqrt{\frac{R^{2} \Lambda^{2}}{n^g_+}} + \sqrt{\frac{\text{log} \frac{2}{\delta}}{2n^g_+}}.
\end{flalign*}

\end{proof}

\subsection*{Dependency graph}
In figure \ref{fig:dep_graph} we provide an example of typical dependency graph.
\begin{figure}[h]
    \centering
    \includegraphics[width=0.7\linewidth]{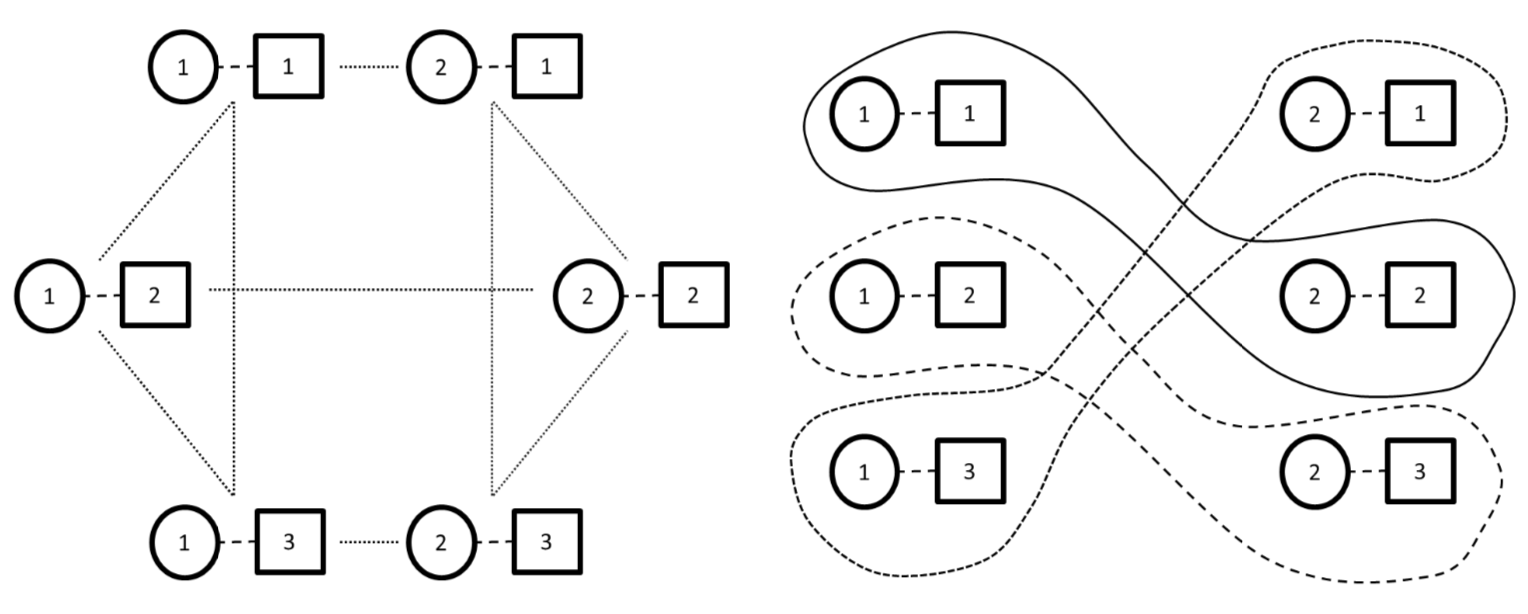}
    \captionsetup{width=.95\linewidth}
    \caption{
        \textit{(left)} Dependency graph between the pairs generated for a bipartite ranking problem with two positive examples (circles) and three negative examples (squares), \textit{(right)} description of a cover of the vertices of the dependency graph into 3 subsets containing independent pairs (figure from \cite{Springer2015}).
    }
    \label{fig:dep_graph}  
\end{figure}

\subsection*{Benchmark details}

Our benchmark consists of three open source, real-life datasets that happen to pertain to the digital marketing and social sciences applications.

\textbf{Hillstrom Email Marketing} (\href{http://www.minethatdata.com/Kevin_Hillstrom_MineThatData_E-MailAnalytics_DataMiningChallenge_2008.03.20.csv}{\color{blue}{link}}) \cite{Hillstrom2008} dataset contains results of an e-mail campaign for an Internet based retailer. We report results on the visit outcome of Women’s merchandise e-mail as treatment group versus no e-mail as control group as in previous research \cite{Rzepakowski2012b}.

\textbf{Criteo-UPLIFT2} (\href{https://s3.us-east-2.amazonaws.com/criteo-uplift-dataset/criteo-uplift-v2.csv.gz}{\color{blue}{link}}) \cite{Diemert2018} is a large scale dataset constructed from incrementality A/B tests, a particular protocol where a random part of the population is prevented from being targeted by advertising. For the speed of experiments we pick a random subsample of size 1M. We denote this variant of data as CU2-1M. 

\textbf{Jobs} \cite{lalonde1986evaluating} dataset includes 8 covariates (such as age and education, as well as previous earnings), 1 treatment (job training) and 2 outcomes (are income and employment status after training). It consists of randomized part, based on the National Supported Work program) and observational studies. We use modification of the dataset for binary classification constructed in \cite{shalit2017estimating} (\href{http://www.fredjo.com/files/jobs_DW_bin.new.10.train.npz}{\color{blue}{train link}}, \href{http://www.fredjo.com/files/jobs_DW_bin.new.10.test.npz}{\color{blue}{test link}}).

\begin{minipage}[b]{.5\linewidth}
  \centering
  \captionsetup{width=\linewidth}
  \captionof{table}{Benchmark data sets}
  \resizebox{\columnwidth}{!}{
  \begin{tabular}{lll}
    \toprule
    Data set & Hillstrom & CU2-1M\\
    \midrule
    Size & 42693 & 1000000 \\
    Features & 22 & 12 \\
    Group T ratio & 0.49905 & 0.8502 \\
    Positive class ratio & 0.12883 & 0.04897 \\
    Pos. class ratio in group T & 0.1514 & 0.04944 \\
    Pos. class ratio in group C & 0.10617 & 0.04631 \\
    Average Uplift & 0.04523 & 0.00313 \\
    \bottomrule
  \end{tabular}
  }
  \label{tab:bench}
  
\end{minipage}\hfill
\begin{minipage}[b]{.4\linewidth}
    \centering
    \includegraphics[width=\linewidth]{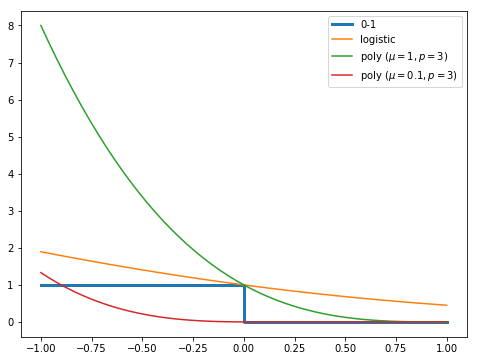}
    \captionsetup{skip=0pt, width=1.2\linewidth}
    \captionof{figure}{Surrogates for indicator function.}
    \label{fig:surr}
\end{minipage}

\subsection*{Experimental Setup details}
Technically we implemented all surrogate losses and methods (except SVM-DP\footnote{We used code of the original paper from \url{https://ftp.cs.wisc.edu/machine-learning/shavlik-group/kuusisto.ecml14.svmuplcode.zip}} and MML\footnote{We used code of the original paper from \url{https://github.com/i-yamane/uplift/blob/master/uplift/with_separate_labels/_minmax_linear.py}}) in Tensorflow framework \cite{abadi2016tensorflow}. For the optimization, Adam algorithm was used with step decay to update the learning rate.

\textbf{AUUC evaluation on Hillstrom and CU2-1M.}
For this experiments, 100 random train+validation/test splits stratified by treatment variable were used. Inside each split, for hyperparameter selection we did grid search using 5-fold cross-validation also stratified by treatment (an exception is AUUC-max, where we omitted cross-validation due to the design of the algorithm).
For all models batch size was 1000 and we ran 200 epochs of learning with early stopping by loss on validation. Grids of the hyperparameters for Hillstrom and CU2-1M datasets are provided in Table \ref{tab:params_grids_hill} and Table \ref{tab:params_grids_cu2} respectively.

\textbf{Policy Risk evaluation on Jobs.}
For the Jobs dataset, 10 train/test
splits provided in \cite{shalit2017estimating} were used. Inside each split, for hyperparameter selection we applied randomized grid search (with 50 combinations) using 1 train/validation split (except AUUC-max). Grid of the hyperparameters for Jobs dataset is provided in Table \ref{tab:params_grids_jobs}.

\begin{table}[!h]
  \caption{Hyperparameters grids for Hillstrom data}
  \label{tab:params_grids_hill}
  \centering
  \resizebox{\columnwidth}{!}{
  \begin{tabular}{llllllll}
    \toprule
    Parameter & Learning rate & $l_2$ reg. term & $\Lambda$ & $C$ & $h_{dim}$ & $\alpha$\\
    \midrule
    TM & $[1\mathrm{e}^{-1}, 5\mathrm{e}^{-1}]$  & $[0, 1\mathrm{e}^{-6}, 1\mathrm{e}^{-4}]$ & - & - & - & - \\
    CVT & $[5\mathrm{e}^{-3}, 1\mathrm{e}^{-2}]$  & $[0, 1\mathrm{e}^{-6}, 1\mathrm{e}^{-4}]$ & - & - & - & - \\
    DDR & $[1\mathrm{e}^{-1}, 5\mathrm{e}^{-1}]$  & $[0, 1\mathrm{e}^{-6}, 1\mathrm{e}^{-4}]$ & - & - & - & - \\
    SDR & $[1\mathrm{e}^{-1}, 5\mathrm{e}^{-1}]$  & $[0, 1\mathrm{e}^{-6}, 1\mathrm{e}^{-4}]$ & - & - & - & - \\
    TARNet & $[1\mathrm{e}^{-3}, 5\mathrm{e}^{-3}]$  & $[0, 1\mathrm{e}^{-6}, 1\mathrm{e}^{-4}]$ & - & - & - & - \\
    AUUC-max (all) & $[5\mathrm{e}^{-4}, 1\mathrm{e}^{-3}]$  & - & $[5\mathrm{e}^{-1}, 8\mathrm{e}^{-1}, 1\mathrm{e}^{0}]$ & - & - & - \\
    SVM-DP & - & - & - & $[1\mathrm{e}^{-1}, 1\mathrm{e}^{0}, 1\mathrm{e}^{1}]$ & - & - \\
    GANITE & - & - & - & - & $[3\mathrm{e}^{1}, 5\mathrm{e}^{1}, 1\mathrm{e}^{2}]$ & $[1\mathrm{e}^{0}, 1\mathrm{e}^{2}, 1\mathrm{e}^{3}]$ \\
    
    \bottomrule
  \end{tabular}
  }
\end{table}

\begin{table}[!h]
  \caption{Hyperparameters grids for CU2-1M data}
  \label{tab:params_grids_cu2}
  \centering
  \resizebox{\columnwidth}{!}{
  \begin{tabular}{llllll}
    \toprule
    Parameter & batch size & learning rate & $l_2$ reg. term & $\Lambda$ & $C$\\
    \midrule
    TM & $[1\mathrm{e}^{-1}, 5\mathrm{e}^{-1}]$  & $[0, 1\mathrm{e}^{-6}, 1\mathrm{e}^{-4}]$ & - & - \\
    SDR & $[1\mathrm{e}^{-1}, 5\mathrm{e}^{-1}]$  & $[0, 1\mathrm{e}^{-6}, 1\mathrm{e}^{-4}]$ & - & - \\
    AUUC-max ($s_{poly}$) & $[5\mathrm{e}^{-3}, 1\mathrm{e}^{-2}]$ & - & $[5\mathrm{e}^{-1}, 1\mathrm{e}^{0}, 5\mathrm{e}^{0}]$ & - \\
    SVM-DP & - & - & - & $[1\mathrm{e}^{-1}, 1\mathrm{e}^{0}, 1\mathrm{e}^{1}]$ \\
    
    \bottomrule
  \end{tabular}
  }
\end{table}

\begin{table}[!h]
  \caption{Hyperparameters grids for Jobs data}
  \label{tab:params_grids_jobs}
  \centering
  \resizebox{\columnwidth}{!}{
  \begin{tabular}{llllllll}
    \toprule
    Parameter & learning rate & $l_2$ reg. term & $\Lambda$ & batch size & nb. of epochs & $h_{dim}$ & $\alpha$\\
    \midrule
    TM & $[1\mathrm{e}^{-4}, 1\mathrm{e}^{-3}, 1\mathrm{e}^{-2}]$  & $[0, 1\mathrm{e}^{-6}, 1\mathrm{e}^{-4}, 1\mathrm{e}^{-2}]$ & - & $[1\mathrm{e}^{2}, 5\mathrm{e}^{2}, 1\mathrm{e}^{3}]$ & $[5\mathrm{e}^{1}, 1\mathrm{e}^{2}, 5\mathrm{e}^{2}]$ & - & - \\
    SDR & $[1\mathrm{e}^{-4}, 1\mathrm{e}^{-3}, 1\mathrm{e}^{-2}]$  & $[0, 1\mathrm{e}^{-6}, 1\mathrm{e}^{-4}, 1\mathrm{e}^{-2}]$ & - & $[1\mathrm{e}^{2}, 5\mathrm{e}^{2}, 1\mathrm{e}^{3}]$ & $[5\mathrm{e}^{1}, 1\mathrm{e}^{2}, 5\mathrm{e}^{2}]$ & - & - \\
    TARNet & $[1\mathrm{e}^{-4}, 1\mathrm{e}^{-3}, 1\mathrm{e}^{-2}]$  & $[0, 1\mathrm{e}^{-6}, 1\mathrm{e}^{-4}, 1\mathrm{e}^{-2}]$ & - & $[1\mathrm{e}^{2}, 5\mathrm{e}^{2}, 1\mathrm{e}^{3}]$ & $[5\mathrm{e}^{1}, 1\mathrm{e}^{2}, 5\mathrm{e}^{2}]$ & - & - \\
    AUUC-max (all) & $[1\mathrm{e}^{-4}, 1\mathrm{e}^{-3}, 1\mathrm{e}^{-2}]$  & - & $[5\mathrm{e}^{-3}, 1\mathrm{e}^{-2}, 5\mathrm{e}^{-2}, 1\mathrm{e}^{-1}, 5\mathrm{e}^{-1}]$ & $[1\mathrm{e}^{2}, 5\mathrm{e}^{2}, 1\mathrm{e}^{3}]$ & $[5\mathrm{e}^{1}, 1\mathrm{e}^{2}, 5\mathrm{e}^{2}]$ & - & - \\
    GANITE & $[1\mathrm{e}^{-5}, 1\mathrm{e}^{-4}, 1\mathrm{e}^{-3}]$  & - & - & $[1\mathrm{e}^{2}, 5\mathrm{e}^{2}, 1\mathrm{e}^{3}]$ & $[5\mathrm{e}^{1}, 1\mathrm{e}^{2}, 5\mathrm{e}^{2}]$ & $[3\mathrm{e}^{1}, 5\mathrm{e}^{1}, 1\mathrm{e}^{2}]$ & $[1\mathrm{e}^{0}, 1\mathrm{e}^{2}, 1\mathrm{e}^{3}]$ \\
    
    \bottomrule
  \end{tabular}
  }
\end{table}

\textbf{Evaluation of the generalization bound.}
To assess the tightness of our bound, we depict the distribution of the differences between the true AUUC (= $\E[AUUC]$) and the lower bound computed on the Hillstrom dataset. For that purpose, we learn an AUUC-max model and record the train and test AUUCs. 
$\E[AUUC]$ is estimated from the upper bound of an Empirical Bernstein inequality \cite{maurer2009empirical} on the test sets obtained from 4,000 random train/test splits, giving a precision greater or equal than $.001$ with probability $>.99$. 
The distribution of the generalization error modeled by the bound is then simply the difference between train and test AUUCs.

\textbf{Choice of surrogate.}
For the polynomial surrogate $s_{poly}$ for AUUC-max we used hyperparameters $(\mu=0.1, p=3)$. All mentioned kinds of surrogates are provided in the Figure \ref{fig:surr}.

\textbf{Hardware information.}
All experiments were run on a Linux machine with 32 CPUs (Intel(R) Xeon(R) Gold 6134 CPU @ 3.20GHz), with 2 threads per core, and 120Gb of RAM, with parallelising across 20 CPUs.

\subsection*{Additional experiments}

\textbf{AUUC on CU2-1M.}
For evaluation on the CU2-1M collection we select best performing methods on Hillstrom that can be trained reasonably fast. Results in Table \ref{tab:auuc_criteo} show very little variability and we find no one method to perform significantly better than another. We conjecture that properly tuning hyperparameters in validation blurs differences in learning quality on this large collection.


\textbf{Policy Risk on Hillstrom.}
AUUC-max has lower risk (better) when treatment would target the first 0 to 30\% most incremental individuals, as ranked by model predictions. When targeting more than 40\% of the population with the treatment SDR would have lower risk, followed by AUUC-max then TM. 


\begin{table}[ht]
\centering
\begin{minipage}{.677\linewidth}
\captionsetup{width=.95\linewidth}
\caption{\textbf{Test Policy Risk on \textit{Hillstrom}} \\ \underline{Underline} indicates lowest risk at this threshold}
\resizebox{0.975\columnwidth}{!}{%

\begin{tabular}{lrrrrrrrrr}
Treatment Ratio &     0.1 &     0.2 &     0.3 &     0.4 &   0.5  &  0.6 &     0.7 &     0.8 &     0.9 \\
\midrule
TM         &  0.8841 &  0.8768 &  0.8689 &  0.8610 & 0.8560  & 0.8541 &  0.8524 &  0.8514 &  0.8504 \\
SDR        &  0.8839 &  0.8762 &  0.8684 &  \underline{0.8603} & \underline{0.8551} & \underline{0.8534} &  \underline{0.8516} &  \underline{0.8507} &  \underline{0.8498} \\
AUUC-max &  \underline{0.8832} &  \underline{0.8759} &  \underline{0.8684} &  0.8607 & 0.8559 & 0.8544 &  0.8522 &  0.8512 &  0.8501 \\
\label{tab:policy_risk_hill}
\end{tabular}
}
\end{minipage}%
\begin{minipage}{.033\linewidth}\end{minipage}%
\begin{minipage}{.3\linewidth}
\captionsetup{width=.9\linewidth}
\caption{\textbf{Test AUUC on \textit{CU2-1M}}}
\label{tab:auuc_criteo}
\resizebox{0.95\columnwidth}{!}{
\begin{tabular}{lr}
Model & Test AUUC $\pm2\sigma$ \\
\midrule
TM & .00280 $\pm$ .00151\\
SVM-DP & .00280 $\pm$ .00152\\
SDR &  .00280 $\pm$ .00152\\
AUUC-max & .00279 $\pm$ .00149\\
\end{tabular}
}
\end{minipage}
\end{table}

\end{document}